\DeclareMathOperator*{\argmax}{arg\,max}
\DeclareMathOperator*{\argmin}{arg\,min}
\DeclareMathOperator*{\E}{\mathbb{E}}
\begin{document}
\title{Augmented Neural Fine-Tuning for Efficient Backdoor Purification} 

\titlerunning{Augmented Neural Fine-Tuning for Efficient Backdoor Purification}

\author{Nazmul Karim\inst{*,1}\orcidlink{0000-0001-5522-4456} \and
Abdullah Al Arafat\inst{*,2}\orcidlink{0000-0002-7017-0158} \and
Umar Khalid\inst{1}\orcidlink{0000-0002-3357-9720} \and
Zhishan Guo\inst{2}\orcidlink{0000-0002-5967-1058} \and
Nazanin Rahnavard\inst{1}\orcidlink{0000-0003-3434-1359}}

\authorrunning{N. Karim et al.}

\institute{$^1$University of Central Florida \quad 
$^2$North Carolina State University}

\maketitle
\begingroup
\renewcommand\thefootnote{$*$}
\footnotetext{Equal Contribution}
\endgroup

\begin{abstract}
Recent studies have revealed the vulnerability of deep neural networks (DNNs) to various backdoor attacks, where the behavior of DNNs can be compromised by utilizing certain types of triggers or poisoning mechanisms. State-of-the-art (SOTA) defenses employ too-sophisticated mechanisms that require either a computationally expensive adversarial search module for reverse-engineering the trigger distribution or an over-sensitive hyper-parameter selection module. Moreover, they offer sub-par performance in challenging scenarios, \eg, limited validation data and strong attacks. In this paper, we propose---{\em Neural mask Fine-Tuning (NFT)}---with an aim to optimally re-organize the neuron activities in a way that the effect of the backdoor is removed. Utilizing a simple data augmentation like MixUp, NFT relaxes the trigger synthesis process and eliminates the requirement of the adversarial search module. Our study further reveals that direct weight fine-tuning under limited validation data results in poor post-purification clean test accuracy, primarily due to \emph{overfitting issue}. To overcome this, we propose to fine-tune neural masks instead of model weights. In addition, a \emph{mask regularizer} has been devised to further mitigate the model drift during the purification process. The distinct characteristics of NFT render it highly efficient in both runtime and sample usage, as it can remove the backdoor even when a single sample is available from each class. We validate the effectiveness of NFT through extensive experiments covering the tasks of image classification, object detection, video action recognition, 3D point cloud, and natural language processing. We evaluate our method against 14 different attacks (LIRA, WaNet, \etc) on 11 benchmark data sets (ImageNet, UCF101, Pascal VOC, ModelNet, OpenSubtitles2012, \etc). Our code is available online in this \href{https://github.com/nazmul-karim170/NFT-Augmented-Backdoor-Purification}{GitHub Repository}.
\end{abstract}
\section{Introduction}
Machine learning and computer vision algorithms are increasingly common in safety-critical applications~\cite{janai2020computer,bian2022machine}, necessitating the design of secure and robust learning algorithms.  
Backdoor attack~\cite{chen2017targeted,gu2019badnets} on deep neural network (DNN) models is one of the heavily studied branches of AI safety and robustness. Backdoor defenses can generally be categorized into two major groups based on whether the defense is done during training or test. Training-time-defense (\eg,~\cite{li2021anti,tran2018spectral,levine2020deep}) focuses on training a benign model on the poisonous data, while test-time-defense (\eg,~\cite{liu2018fine,wu2021adversarial,chai2022one,li2023reconstructive}) deals with purifying backdoor model after it has already been trained. In this work, our goal is to develop an efficient test-time defense. Some test-time defenses focus on synthesizing trigger patterns~\cite{wang2019neural,chen2019deepinspect,wang2020practical} followed by vanilla weight fine-tuning. Most of these defenses aim to synthesize class-specific triggers independently or use additional models to generate the triggers simultaneously. Recent state-of-the-art (SOTA) backdoor defense methods, \eg, ANP~\cite{wu2021adversarial}, I-BAU~\cite{zeng2021adversarial}, AWM~\cite{chai2022one}, employ very similar techniques and do not work well without an expensive \emph{adversarial search module}. For example, ANP~\cite{wu2021adversarial} performs an adversarial search to find vulnerable neurons responsible for backdoor behavior. Identifying and pruning vulnerable neurons require an exhaustive adversarial search, resulting in high computational costs.
Similar to ANP, AWM and I-BAU also resort to trigger synthesizing with a modified adversarial search process. Another very recent technique, FT-SAM~\cite{zhu2023enhancing} uses sharpness-aware minimization (SAM)~\cite{foret2021sharpnessaware} to fine-tune model weights. SAM is a recently proposed optimizer that utilizes Stochastic Gradient Descent (SGD), which penalizes sudden changes in the loss surface by constraining the search area to a compact region. Since SAM performs a double forward pass to compute the loss gradient twice, it results in a notable runtime increase for FT-SAM. In our work, \emph{we aim to develop an effective backdoor defense system that neither requires an expensive adversarial search process to recover the trigger nor a special type of optimizer with a runtime bottleneck.}


To achieve this goal, we propose a simple yet effective approach \underline{N}eural mask \underline{F}ine-\underline{T}uning (NFT), to remove backdoor through augmented fine-tuning of cost-efficient neural masks. We start with replacing the expensive adversarial search-based \emph{trigger synthesis} process with a simple data augmentation technique---MixUp~\cite{zhang2017mixup}. In general, the backdoor is inserted by forcing the model to memorize the trigger distribution. Intuitively,  synthesizing and unlearning that trigger distribution would effectively remove the backdoor. In this work, we show that \emph{unlearning can be performed by simply optimizing the MixUp loss over a clean validation set}. Our theoretical analysis suggests that MixUp loss is an upper bound on the standard loss obtained from triggered (synthesized or already known) validation data, termed as \emph{ideal purification loss}. As the minimization of ideal loss guarantees backdoor purification, minimizing the MixUp loss would effectively remove the backdoor (Sec.~\ref{sec:optimization_NFT}). {\color{black} As the next step of our method, we address the overfitting issue during weight fine-tuning under \emph{limited validation data}. In general, the outcome of such overfitting is poor post-purification test accuracy, which is not desirable for any backdoor defense.} To this end, we propose to fine-tune a set of neural masks instead of the model weights, as this type of soft-masking enables us to reprogram the neurons affected by the backdoor without significantly altering the original backdoor model. As an added step to this, a mask regularizer has been introduced to further mitigate model drift during the purification process. In addition, we deploy a mask scheduling function to have better control over the purification process. Our experimental results indicate that these straightforward yet intuitive steps significantly improve the post-purification test accuracy as compared to previous SOTA. Our contributions can be summarized as follows:
\begin{itemize}
    \item We propose a novel backdoor removal framework utilizing simple MixUp-based model fine-tuning. Our thorough analysis shows how minimizing the MixUp loss eliminates the requirement of an expensive trigger synthesis process while effectively removing the backdoor (Sec.~\ref{sec:optimization_NFT}).  
    \item To preserve the post-purification test accuracy, we propose to fine-tune soft neural masks (instead of weights) as it prevents any drastic change in the original backdoor model (Sec.~\ref{sec:clean_NFT}). Additionally, a novel mask regularizer has been introduced that further encourages the purified model to retain the class separability of the original model. In addition to being computationally efficient, our proposed method shows significant improvement in sample efficiency as it can purify backdoor even with one-shot fine-tuning, i.e., only a single sample is available from each class (Sec.~\ref{sec:sample_eff_NFT}). 
    \item To show the effectiveness of NFT, we perform an extensive evaluation with 11 different datasets. Compared to previous SOTA, the superior performance against a wide range of attacks suggests that augmentation like MixUp can indeed replace the \emph{trigger synthesis} process (Sec.~\ref{sec:exp_result}). 
\end{itemize}

\section{Related Work}
\noindent{\bf Backdoor Attack.} Neural networks are intrinsically vulnerable to backdoor attacks~\cite{manoj2021excess,xian2023understanding}. A substantial number of studies have investigated the possibility of backdoor attacks after the initial studies~\cite{gu2019badnets,chen2017targeted,liu2017neural} found the existence of backdoors in DNNs. Generally, backdoor attacks are categorized into two types: clean-label attacks and poison-label attacks. A clean-label backdoor attack does not alter the label~\cite{ning2021invisible,turner2019cleanlabel,zhao2020clean}, while a poison-label attack aims at specific target classes such that the DNN misclassifies to those classes in the presence of a trigger~\cite{li2021neural}. As for trigger types, researchers have studied numerous types of triggering patterns in their respective attacks~\cite{gu2019badnets,chen2017targeted,li2020backdoor,edraki2021odyssey}. Such triggers can exist in the form of dynamic patterns~\cite{li2020backdoor} or as simple as a single pixel~\cite{tran2018spectral}. Some of the more complex backdoor triggers that have been proposed in the literature are sinusoidal strips~\cite{barni2019new}, 
 adversarial patterns~\cite{zhao2020clean}, and blending backgrounds~\cite{chen2017targeted}. Besides, backdoor attacks exist for many different tasks, \eg, multi-label clean image attack~\cite{chen2023clean}  has been proposed that alters the label distribution to insert triggers into the model, which works well in multi-label (\eg, detection) settings; domain adaptation~\cite{ahmed2023ssda} setting while adversary source can successfully insert backdoor to the target domains, \etc.

\noindent\textbf{Backdoor Defense.} Generally, the backdoor defense methods are categorized into two types: Training Time Defense, and Test Time Defense. Regarding training time defense techniques (a few to mention~\cite{tran2018spectral,gao2019strip,liu2017neural,hong2020effectiveness}), the researchers have proposed numerous defense methods through input pre-processing \cite{liu2017neural}, poison-suppression~\cite{hong2020effectiveness}, model diagnosis~\cite{li2020backdoor}, network pruning~\cite{wu2020adversarial,liu2017neural}, and model reconstruction~\cite{zhao2020bridging}, \etc. Notably, DeepSweep~\cite{qiu2021deepsweep} explores different augmentations to purify a backdoor model and rectify the triggered samples. Although DeepSweep revealed that different augmentation functions could be leveraged to invert the backdoor effect of a model or erase the trigger from a trigger-embedded image, our work re-purposes the usage of augmentation differently to cover the approximate (unknown) trigger distribution during the purification phase. Moreover,  DeepSweep assumes that backdoor triggers are known to the defender, which is hardly a practical assumption. In the case of test time defenses, besides the works mentioned in the introduction related to reverse-engineering of backdoor triggers in the input samples~\cite{wang2019neural,chen2019deepinspect,wang2020practical}, several recent works explored the model vulnerability/sensitivity towards adversarially perturbed neurons~\cite{wu2021adversarial}, weights~\cite{chai2022one}, or network channels~\cite{zheng2022data}. However, these approaches require expensive adversarial search processes to be effective. A concurrent work FIP~\cite{FIP} studied the loss-surface smoothness of the backdoor model and developed a purification method by regularizing the spectral norm of the model.  
\section{Threat Model}
\noindent\textbf{Attack Model.} Our work considers the most commonly used data poisoning attacks. Consider $\mathbb{D}_{\mathrm{train}} = \{x_i, y_i\}_{i=1}^{N}$ as the training data where $x_i \in \mathbb{R}^{d}$ is an input sample labeled as $y_i\in \{0,\ldots,c-1\}$ sampled from unknown distribution $\mathcal{D}$ of the task to be learned. Here, $N$ is the total number of samples, and $d$ is the dimension of the input sample. In addition, we assume there are $c$ number of classes in the input data. Let $f_{\theta^*}: \mathbb{R}^d \rightarrow \mathbb{R}^c$ be a benign (ideal) DNN trained with $\mathbb{D}_{\mathrm{train}} \sim \mathcal{D}$. 
Here, $\theta^*$ is the DNN parameters that is to be optimized using a suitable loss function $\ell(.,.)$. The total empirical loss can be defined as, 
\begin{equation}\label{eq:ce_loss}
    \mathcal{L}({\theta^*}, \mathbb{D}_{\mathrm{train}}) =\frac{1}{N} \sum_{i=1}^N [\ell(y_i, f_{\theta^*}(x_i))].    
\end{equation} 
Now, consider an adversary inserts backdoor to a model $f_\theta(.)$ through modifying a small subset of $\mathbb{D}_{\mathrm{train}}$ as $\{\hat{x}_i, \hat{y}_i\}$ such that $\hat{y}_i = \argmax f_\theta ( \hat{x}_i )$ preserving $y_i =\argmax f_\theta (x_i),~ \forall (x_i,y_i) \in \mathbb{D}_{\mathrm{train}}$. Here, $\hat{x}_i = x_i +\delta$ is the triggered input with adversary set target label $\hat{y}_i \neq y_i$, where $\delta \in \mathbb{R}^d$ represents trigger pattern.

\noindent\textbf{Defense Objective.} Consider a defense model where defender removes backdoor from $f_\theta(.)$ using a small validation data $\mathbb{D}_\mathrm{val} = \{x_i, y_i\}_{i=1}^{N_{val}}$ such that $y_i= \argmax f_{\theta_c} (\hat{x}_i)$, where $y_i \neq \hat{y}_i$.

\section{Neural Fine-Tuning (NFT)}\label{sec:method}

Let us consider a fully-connected DNN, $f_{\theta}:\mathbb{R}^d \rightarrow \mathbb{R}^c$, that receives a datapoint $x \in \mathbb{R}^d$ and predicts a probability distribution $p \in \mathbb{R}^c$; where $c$ is the number of classes. In general, $x$ goes through a multi-layer DNN architecture before the DNN model predicts an output class $i = \argmax p$. Let us consider a multi-layer DNN architecture with $L$ layers in which the $l$-th layer contains $k_l$ neurons. The neurons of each layer produce activations $\psi_l \in \mathbb{R}^{k_l}$ based on the output activations of previous layer $\psi_{l-1} \in \mathbb{R}^{k_{l-1}}$. To be specific, 
\begin{equation}\label{eq:dnn_operations}
    \psi_l := \sigma(\Theta_l^T\cdot\psi_{l-1}+b_l),
\end{equation} 
where $\sigma(.)$ is a non-linear activation function, matrix $\Theta_l = [\Theta_l^{(1)} \cdots \Theta_l^{(k_l)}] \in \mathbb{R}^{k_{l-1} \times k_l}$, 
for $l=1,2,\ldots, L$, includes the weights of the $l$-th layer, and $b_l \in \mathbb{R}^{k_l}$ is the bias vector. Here, $\Theta_l^{(j)} \in \mathbb{R}^{k_{l-1}}$ denotes the weights vector corresponding to the activation of the $j$-th neuron of the $l$-th layer. Model parameters $\theta$ can be expressed as $\theta = \{\Theta_1, \ldots, \Theta_L\}$. This type of multi-layer DNN architecture is also valid for convolutional neural networks, where we use multiple 2-D arrays of neurons (\ie, filters) instead of a 1-D array.

\subsection{Backdoor Suppressor} \label{sec:optimization_NFT}

Our objective is to make the backdoor model forget about poison distribution while retaining the knowledge of clean distribution. To understand how we achieve this objective, let us first revisit the process of generating triggered data ($\hat{x}$). In general, $\hat{x}$ is created by adding minor modifications (\ie, adding triggers $\delta$) to clean data ($x$). Note that the backdoor is inserted by forcing the model to learn the mapping, $\hat{x} \rightarrow \hat{y}$. Here, ($\hat{x},\hat{y}$) is the poison data. If we were to change the mapping from ($\hat{x} \rightarrow \hat{y}$) to ($\hat{x} \rightarrow y$), we would have a robust clean model instead of a backdoor model. This is because, in this case, the model would treat $\delta$ as one type of augmentation and $\hat{x}$ as the augmented clean data. In summary, the backdoor insertion process functions as an augmentation process if we simply use $y$ instead of $\hat{y}$.  
Now, ideally, fine-tuning the backdoor model with triggered data with corresponding ground truth labels (\ie, \(\{\hat{x}, y\}\)) would remove the backdoor effect from the model. Let us consider this ideal scenario where we have access to the trigger $\delta$ during purification. We define the \emph{ideal purification loss} as:
\begin{equation}\label{eqn:ideal-loss}
   \mathcal{L}^{\mathrm{ideal}} (\theta, \mathbb{D}_\mathrm{val}) = \frac{1}{N_{\mathrm{val}}} \sum_{i=1}^{N_\mathrm{val}}\ell(y_i, f_\theta(\hat{x}_i)),
\end{equation} 
where \(\hat{x} = x + \delta\) and $y$ is the ground truth label of $x$. In our work, as we do not have access to $\delta$ or adversarially reverse engineer it~\cite{wu2021adversarial,zeng2021adversarial,chai2022one}, we relax this process by strongly augmenting the clean validation data, \ie, creating augmented $\mathbb{D}_\mathrm{val}$ with already known augmentation technique such as MixUp~\cite{zhang2017mixup}. For MixUp, we can easily perform \(\tilde{x}_{i,j} = \lambda x_i + (1-\lambda)x_j\) and \(\tilde{y}_{i,j} = \lambda y_i + (1-\lambda) y_j\) for \(\lambda \in [0,1]\); here $\tilde{y}_{i,j}$ represents the linear combination of one-hot vectors corresponding to $y_i$ and $y_j$. The loss after the MixUp becomes:
\begin{equation}\label{eq:aug_nft-NO}
    ~\mathcal{L}^{\mathrm{mix}}(\theta,\mathbb{D}_\mathrm{val} )=\frac{1}{N_\mathrm{val}^2} \sum_{i,j=1}^{N_\mathrm{val}}\E_{\lambda \sim \mathcal{D_\lambda}}\ell(\tilde{y}_{i,j}, f_{\theta}(\Tilde{x}_{i,j})),
\end{equation} 
where \(\mathcal{D}_\lambda\) is a distribution supported on \([0,1]\). In our work, we consider the widely used $\mathcal{D}_\lambda$ -- Beta distribution $Beta(\alpha,\beta)$ for $\alpha,\beta>0$. We provide both empirical (Sec.~\ref{sec:exp_result}) and theoretical proof for a binary classification problem on why minimizing Eq.~\eqref{eq:aug_nft-NO} would effectively remove the backdoor.

\noindent \textbf{Theoretical Justifications.}\label{sec:theor_just} For a fully-connected neural network (NN) with logistic loss \(\ell (y, f_\theta(x)) = \log(1+\exp{(f_\theta(x))}) - y f_\theta(x)\) with \(y \in \{0,1\}\),  it can be shown that $\mathcal{L}^{\mathrm{mix}}(\theta,\mathbb{D}_\mathrm{val} )$ is an upper-bound of the second order Taylor expansion of the ideal loss $\mathcal{L}^{\mathrm{ideal}} (\theta, \mathbb{D}_\mathrm{val})$. With the nonlinearity $\sigma$ for ReLU and max-pooling in NN, the function $f_\theta$ satisfies that $f_\theta(x) = \nabla f_\theta(x)^T x$ and $\nabla^2 f_\theta(x) = 0$ almost everywhere, where the gradient is taken with respect to the input $x$. 

We first rewrite the $\mathcal{L}^{\mathrm{ideal}} (\theta, \mathbb{D}_\mathrm{val})$ using Taylor series approximation. The second-order Taylor expansion of $\ell(y, f_\theta(x + \delta))$ is given by,
\begin{equation*}
    \ell(y, f_\theta(x + \delta)) = \ell(y, f_\theta(x)) + (g(f_\theta(x)) -y)(f_\theta(\delta)) + \frac{1}{2}g(f_\theta(x))(1-g(f_\theta(x)))(f_\theta(\delta))^2,
\end{equation*} 
where \(g(x) = \frac{e^x}{1+e^x}\) is the logistic function. Based on the MixUp related analysis in prior works~\cite{carratino2022mixup,zhang2020does}, the following can be derived for \(\mathcal{L}^{\mathrm{mix}}(\theta,\mathbb{D}_\mathrm{val} )\) using the second-order Taylor series expansion,
\begin{lemma}\label{lemma:mixup_reg}
Assuming $f_\theta(x) = \nabla f_\theta(x)^T x$ and $\nabla^2 f_\theta(x) = 0$ (which are satisfied by ReLU and max-pooling activation functions), \(\mathcal{L}^{\mathrm{mix}}(\theta,\mathbb{D}_\mathrm{val} )\) can be expressed as,
\begin{equation}
    \mathcal{L}^{\mathrm{mix}}(\theta,\mathbb{D}_\mathrm{val} ) = \mathcal{L}(\theta,\mathbb{D}_\mathrm{val} ) + \mathcal{R}_1(\theta, \mathbb{D}_\mathrm{val}) +\mathcal{R}_2(\theta, \mathbb{D}_\mathrm{val})
\end{equation}  
where,
\[
\mathcal{R}_1(\theta, \mathbb{D}_\mathrm{val}) \geq \frac{Rc_x\E_\lambda[(1-\lambda)]\sqrt{d}}{N_\mathrm{val}} \sum_{i=1}^{N_\mathrm{val}}|g(f_\theta(x_i)) - y_i |\cdot||\nabla f_\theta(x_i)||_2
\]
\[
\mathcal{R}_2(\theta, \mathbb{D}_\mathrm{val}) \geq \frac{R^2c_x^2\E_\lambda[(1-\lambda)]^2{d}}{2N_\mathrm{val}} \sum_{i=1}^{N_\mathrm{val}}|g(f_\theta(x_i))(1 - g(f_\theta(x_i)))|\cdot||\nabla f_\theta(x_i)||_2^2,
\]
where \(R=\min_{i\in [N_\mathrm{val}]}\langle\nabla f_\theta(x_i), x_i\rangle/||\nabla f_\theta(x_i)||\cdot||x_i||\) and \(c_x > 0\) is a constant. 
\end{lemma}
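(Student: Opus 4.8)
The plan is to follow the ``mixup-as-regularization'' route of \cite{zhang2020does,carratino2022mixup}, specialized to the logistic loss and exploiting the piecewise-linearity of ReLU/max-pool networks. First I would reparametrize each mixed sample as a perturbation anchored at $x_i$, writing $\tilde{x}_{i,j} = x_i + \delta_{i,j}$ with $\delta_{i,j} = (1-\lambda)(x_j - x_i)$ and $\tilde{y}_{i,j} = y_i + (1-\lambda)(y_j - y_i)$. Because $\nabla^2 f_\theta = 0$ almost everywhere and $f_\theta(x) = \nabla f_\theta(x)^T x$, the network is locally affine, so within a linear region $f_\theta(\tilde{x}_{i,j}) = f_\theta(x_i) + \nabla f_\theta(x_i)^T \delta_{i,j}$ \emph{exactly}. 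This lets me apply the stated second-order Taylor expansion of $\ell(\cdot, f_\theta(\cdot))$ summand by summand, with the scalar logit increment $f_\theta(\delta_{i,j}) := \nabla f_\theta(x_i)^T \delta_{i,j}$ playing the role of the perturbation; the $\nabla^2 f_\theta = 0$ assumption is what makes the remainder vanish rather than merely be controlled.

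Next I would substitute this expansion into the definition of $\mathcal{L}^{\mathrm{mix}}$, then take the expectation over $\lambda \sim \mathcal{D}_\lambda$ and average over the index $j$. The zeroth-order term reproduces $\mathcal{L}(\theta, \mathbb{D}_\mathrm{val})$. The first-order term collects $\E_\lambda[(1-\lambda)]$ times the $j$-average of $(g(f_\theta(x_i)) - y_i)\,\nabla f_\theta(x_i)^T (x_j - x_i)$, which I define to be $\mathcal{R}_1$; the second-order term collects $\tfrac{1}{2}\E_\lambda[(1-\lambda)^2]$ times the $j$-average of $g(f_\theta(x_i))(1-g(f_\theta(x_i)))\,(\nabla f_\theta(x_i)^T(x_j-x_i))^2$, which is $\mathcal{R}_2$. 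The label-mixing increment $(1-\lambda)(y_j-y_i)$ is handled as in \cite{carratino2022mixup}: reparametrizing so that the dominant mixing weight is attached to the anchor $x_i$ sends $\tilde{y}_{i,j}\to y_i$, which reduces each summand to exactly the fixed-label expansion displayed before the lemma and leaves precisely the gradient-penalty structure above.

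The crux is the lower bound. Here I would invoke the geometric assumption encoded in $R = \min_i \langle \nabla f_\theta(x_i), x_i\rangle/(\|\nabla f_\theta(x_i)\|\,\|x_i\|)$, which via $f_\theta(x_i) = \langle \nabla f_\theta(x_i), x_i\rangle$ gives $\langle \nabla f_\theta(x_i), x_i\rangle \geq R\,\|\nabla f_\theta(x_i)\|_2\,\|x_i\|_2$, together with a coordinatewise bound summarized by $c_x$ so that $\|x_i\|_2 \geq c_x\sqrt{d}$. Combining these bounds the inner products $\nabla f_\theta(x_i)^T(x_j - x_i)$ from below, and passing to the absolute values $|g(f_\theta(x_i)) - y_i|$ and $|g(f_\theta(x_i))(1-g(f_\theta(x_i)))|$ yields the claimed expressions, with $\sqrt{d}$ appearing once in $\mathcal{R}_1$ and $d$ in $\mathcal{R}_2$; for the quadratic term I would additionally apply Jensen's inequality $\E_\lambda[(1-\lambda)^2]\geq \E_\lambda[(1-\lambda)]^2$ to match the stated coefficient. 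I expect the main obstacle to be precisely this step: the raw regularizers are sums of \emph{signed} quantities, whereas the target bounds are sums of \emph{products of absolute values}, so I must carefully track signs — in particular the role of the empirical mean $\tfrac1N\sum_j x_j$ that emerges from the average over $j$, which collapses the $x_j$ cross terms and isolates the clean $\langle \nabla f_\theta(x_i), x_i\rangle = f_\theta(x_i)$ contribution — to ensure the inequalities point in the stated direction rather than merely bounding magnitudes.
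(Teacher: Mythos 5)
The paper never actually proves this lemma itself---it imports it wholesale from the MixUp analyses of \cite{zhang2020does,carratino2022mixup}---and your outline reconstructs exactly the argument of those references: the anchor reparametrization $\tilde{x}_{i,j}=x_i+(1-\lambda)(x_j-x_i)$ together with the label-symmetrization trick that replaces $\tilde{y}_{i,j}$ by $y_i$, the second-order expansion in which $\nabla^2 f_\theta=0$ kills the Hessian-of-$f$ terms, and the lower-bounding via $R$, $\|x_i\|_2\geq c_x\sqrt{d}$, and Jensen ($\E_\lambda[(1-\lambda)^2]\geq\E_\lambda[(1-\lambda)]^2$, plus Jensen over $j$ for the quadratic term). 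The one step you rightly flag as the obstacle---turning the signed first-order term $-(g(f_\theta(x_i))-y_i)\langle\nabla f_\theta(x_i),x_i\rangle$ (after centering so that $\tfrac{1}{N_\mathrm{val}}\sum_j x_j=0$) into a product of absolute values---is precisely where the cited works invoke extra hypotheses (centered inputs and a correct-sign/nonnegativity condition on $R$ and the residuals) that this paper leaves implicit, so your proposal is as complete as the source it parallels.
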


By comparing $\ell(y, f_\theta(x + \delta))$ and $\mathcal{L}^{\mathrm{mix}}(\theta,\mathbb{D}_\mathrm{val} )$ for a fully connected NN, we can prove the following.

\begin{theorem}
Suppose that $f_\theta(x) = \nabla f_\theta(x)^T x$, $\nabla^2 f_\theta(x) = 0$ and there exists a constant $c_x > 0$ such that $\|x_i\|_2 \geq c_x \sqrt{d}$ for all $i \in \{1, \ldots, N_{\mathrm{val}}\}$. Then, for any $f_\theta$, we have
\[
\mathcal{L}^{\mathrm{mix}}(\theta,\mathbb{D}_\mathrm{val}) \geq \frac{1}{N_{\mathrm{val}}} \sum_{i=1}^{N_{\mathrm{val}}} \ell\left(y_i, f_\theta(x_i+\varepsilon_i)\right) \geq \frac{1}{N_{\mathrm{val}}} \sum_{i=1}^{N_{\mathrm{val}}} \ell\left(y_i, f_\theta(x_i+\varepsilon)\right)
\]
where $\varepsilon_i = R_i c_x \E_{\lambda \sim \mathcal{D}_{\lambda}}[1 - \lambda]\sqrt{d}$ with $R_i = \langle\nabla f_\theta(x_i), x_i\rangle/||\nabla f_\theta(x_i)||\cdot||x_i||$ and $\varepsilon = \min\{\varepsilon_i\}$.    
\end{theorem}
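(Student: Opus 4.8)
The plan is to obtain the theorem as a corollary of Lemma~\ref{lemma:mixup_reg} combined with the second-order Taylor expansion of the ideal loss stated just above it, by matching the two regularizers $\mathcal{R}_1,\mathcal{R}_2$ term-by-term against the first- and second-order terms of a per-sample perturbed logistic loss. First I would invoke Lemma~\ref{lemma:mixup_reg} to write $\mathcal{L}^{\mathrm{mix}}(\theta,\mathbb{D}_\mathrm{val})=\mathcal{L}(\theta,\mathbb{D}_\mathrm{val})+\mathcal{R}_1+\mathcal{R}_2$ and lower-bound $\mathcal{R}_1,\mathcal{R}_2$ by the per-sample expressions appearing in its proof (i.e.\ keeping the individual cosines $R_i$ inside the sums rather than collapsing them to $R=\min_i R_i$). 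This reduces the first inequality to the identity that $\mathcal{L}+\mathcal{R}_1^{\mathrm{lb}}+\mathcal{R}_2^{\mathrm{lb}}$ coincides, after the second-order Taylor expansion, with $\frac{1}{N_{\mathrm{val}}}\sum_i\ell(y_i,f_\theta(x_i+\varepsilon_i))$.

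For that identity I would expand each summand with the displayed second-order formula for $\ell(y,f_\theta(x+\delta))$. Because $\nabla^2 f_\theta=0$ and $f_\theta(x)=\nabla f_\theta(x)^T x$, the network is locally affine, so the perturbation enters only through the linear term $f_\theta(\delta_i)=\nabla f_\theta(x_i)^T\delta_i$, and the zeroth-order term reproduces $\mathcal{L}$ verbatim. The key is to take the perturbation in the loss-ascent direction, $\delta_i=\mathrm{sign}(g(f_\theta(x_i))-y_i)\,\varepsilon_i\,\nabla f_\theta(x_i)/\|\nabla f_\theta(x_i)\|$, of magnitude $\varepsilon_i=R_ic_x\E_\lambda[1-\lambda]\sqrt{d}$. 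With this choice the first-order term equals $|g(f_\theta(x_i))-y_i|\,\varepsilon_i\|\nabla f_\theta(x_i)\|$, the $i$-th summand of $\mathcal{R}_1^{\mathrm{lb}}$ after substituting $R_i=\langle\nabla f_\theta(x_i),x_i\rangle/(\|\nabla f_\theta(x_i)\|\,\|x_i\|)$ and $\|x_i\|_2\ge c_x\sqrt d$, while the second-order term equals $\tfrac12 g(1-g)\,\varepsilon_i^2\|\nabla f_\theta(x_i)\|^2$, the $i$-th summand of $\mathcal{R}_2^{\mathrm{lb}}$. Summing and using $\mathcal{R}_k\ge\mathcal{R}_k^{\mathrm{lb}}$ gives $\mathcal{L}^{\mathrm{mix}}\ge\frac{1}{N_{\mathrm{val}}}\sum_i\ell(y_i,f_\theta(x_i+\varepsilon_i))$.

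For the second inequality I would note that $\varepsilon=\min_i\varepsilon_i=R\,c_x\E_\lambda[1-\lambda]\sqrt d\le\varepsilon_i$ and argue monotonicity in the perturbation magnitude. Since the logistic loss is convex in the logit and $t\mapsto f_\theta(x_i+t\hat\delta_i)$ is affine (again by local linearity), $t\mapsto\ell(y_i,f_\theta(x_i+t\hat\delta_i))$ is convex; as $\hat\delta_i$ is the ascent direction its one-sided derivative at $t=0$ is nonnegative, so the map is nondecreasing for $t\ge0$. Hence shrinking each magnitude from $\varepsilon_i$ to the common value $\varepsilon$ can only decrease the summand, yielding $\frac{1}{N_{\mathrm{val}}}\sum_i\ell(y_i,f_\theta(x_i+\varepsilon_i))\ge\frac{1}{N_{\mathrm{val}}}\sum_i\ell(y_i,f_\theta(x_i+\varepsilon))$; equivalently, this last chain is the global-$R$ version of Lemma~\ref{lemma:mixup_reg}'s bounds.

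The step I expect to be the main obstacle is the signed bookkeeping that turns the absolute-value lower bounds of Lemma~\ref{lemma:mixup_reg} into exact matches for the signed Taylor terms: I must check that the perturbation sign makes the first-order contribution loss-increasing so that $|g-y|$ appears correctly, that the per-sample cosines $R_i$ (rather than the global $R$) can legitimately be kept inside the sums coming out of the lemma's derivation, and that the factors $\sqrt d$ and $d$ land on the first- and second-order terms respectively through $\|x_i\|_2\ge c_x\sqrt d$. A related point needing care is the implicit requirement $R_i\ge0$ (equivalently $\varepsilon_i\ge0$), without which the ascent-direction/monotonicity argument and the passage to $\varepsilon=\min_i\varepsilon_i$ would break down.
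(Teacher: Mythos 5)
Your proposal follows essentially the same route as the paper: invoke Lemma~1's decomposition $\mathcal{L}^{\mathrm{mix}} = \mathcal{L} + \mathcal{R}_1 + \mathcal{R}_2$ with its lower bounds, and match these against the second-order Taylor expansion of $\ell(y, f_\theta(x+\delta))$ evaluated at perturbation magnitude $\varepsilon = R c_x \E_\lambda[1-\lambda]\sqrt{d}$. If anything you are more careful than the paper's own proof, which jumps directly from the global-$R$ lower bound to the final expression and never separately establishes the intermediate per-sample inequality with $\varepsilon_i$ (or the monotonicity step from $\varepsilon_i$ down to $\varepsilon=\min_i\varepsilon_i$) that you work out explicitly.
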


\begin{proof}
   is provided in \emph{Supplementary}. 
\end{proof}

Theorem 1 implies that as long as \(||\delta|| \leq \varepsilon\) holds, the MixUp loss \(\mathcal{L}^{\mathrm{mix}}(\theta,\mathbb{D}_\mathrm{val})\) can be considered as an upper-bound of \(\mathcal{L}^{\mathrm{ideal}}(\theta,\mathbb{D}_\mathrm{val})\). 

\subsection{Clean Accuracy Retainer}\label{sec:clean_NFT}
In practice, it is desirable for a backdoor defense technique to be highly \emph{runtime} efficient and retain the \emph{clean test accuracy} of the original model. For better runtime efficiency and to retain clean accuracy, we choose to apply neural mask fine-tuning instead of fine-tuning the entire model, which can be formulated as, 
\begin{equation}\label{eq:objective_function}
    \widehat{M} = \argmin_{M~\mid~m_l^{(i)}\in [\mu (l),1],~\forall l,i} \mathcal{L}^{\mathrm{mix}}({\theta \odot M},\mathbb{D}_{\mathrm{val}}). 
\end{equation} 
We only optimize for neural masks  $M$ using $\mathbb{D}_\mathrm{val}$ and define $\theta \odot M$ as, 
\begin{equation}
\theta \odot M := \{\Theta_1\odot M_1, \Theta_2 \odot M_2, \ldots, \Theta_L \odot M_L\},
\end{equation} 
where $\theta := \{\Theta_1, \ldots, \Theta_L\}$, $M := \{M_1, \ldots, M_L\}$, and $M_l = [m_l^{(1)} \cdots m_l^{(k_l)}]^T\in \mathbb{R}^{k_l}$. 
We have
\begin{equation}~\label{eq:fin_mask}
\Theta_l\odot M_l := [m_l^{(1)}\Theta_l^{(1)} \cdots m_l^{(k_l)}\Theta_l^{(k_l)}]\in \mathbb{R}^{k_{l-1}\times k_{l}}.
\end{equation} 
Note, in Eq.~\eqref{eq:fin_mask}, a \textbf{scalar mask} $m_l^{(i)}$ is applied to the weight vector $\Theta_l^{(i)}$ corresponding to the $i^{th}$ neuron of $l^{th}$ layer. In our work, we formulate a constraint optimization problem where $M$ depends on a mask scheduling function $\mu(l): [1, L] \to [0,1] $. 
Notice that  $\mu(l)$ provides the lower limit of possible $M_l$'s for the $l^{th}$ layer's neurons' mask. We find the suitable function for $\mu(l)$ by analyzing commonly used mathematical functions (\eg, cosine, logarithmic, cubic, \etc). We analyze the impact of these functions in Section~\ref{sec:ablation} and choose an exponential formulation (\eg, $~\alpha \cdot e^{-\beta \cdot l}$) for $\mu(l)$ since it produces the best possible outcome in terms of backdoor removal performance. Such formulation significantly reduces the mask search space, which leads to reduced runtime. Furthermore, the overall formulation for $M$ encourages relatively small changes to the original backdoor model's parameters. This helps us retain on-par clean test accuracy after backdoor removal, which is highly desirable for a defense technique. To this end, we aim to purify the backdoor model by optimizing for the best possible mask $\widehat{M}$ that suppresses backdoor-affected neurons and bolsters the neurons responsible for clean test accuracy. Here, $\widehat{M}$ should give us a purified model as, $f_{\theta_c} (.), \text{where} ~\theta_c = \theta\odot \widehat{M}$. Noteworthily, we do not change the bias as it may harm the classification accuracy. 
\begin{figure}[t]
  \centering
  \begin{subfigure}{0.32\linewidth}
    \includegraphics[width=1\linewidth]{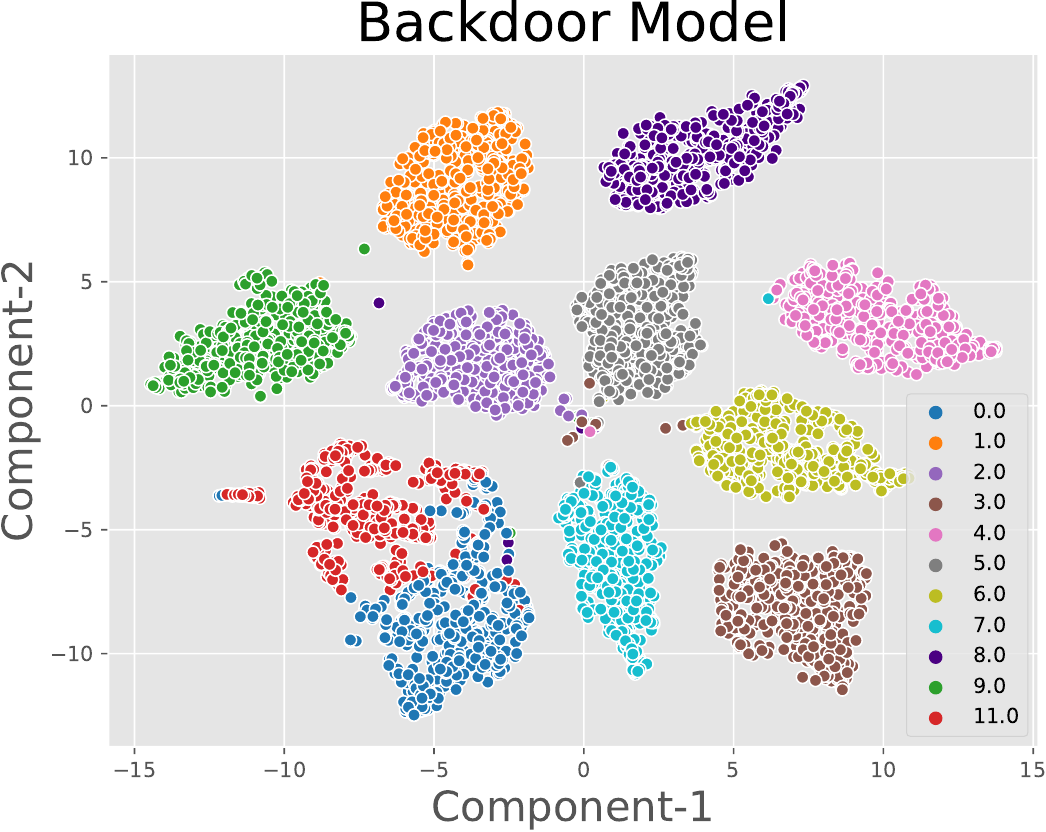}
    \caption{Backdoor Model}
    \label{fig:tsne_backdoor}
  \end{subfigure}
  \begin{subfigure}{0.32\linewidth}
    \includegraphics[width=1\linewidth]{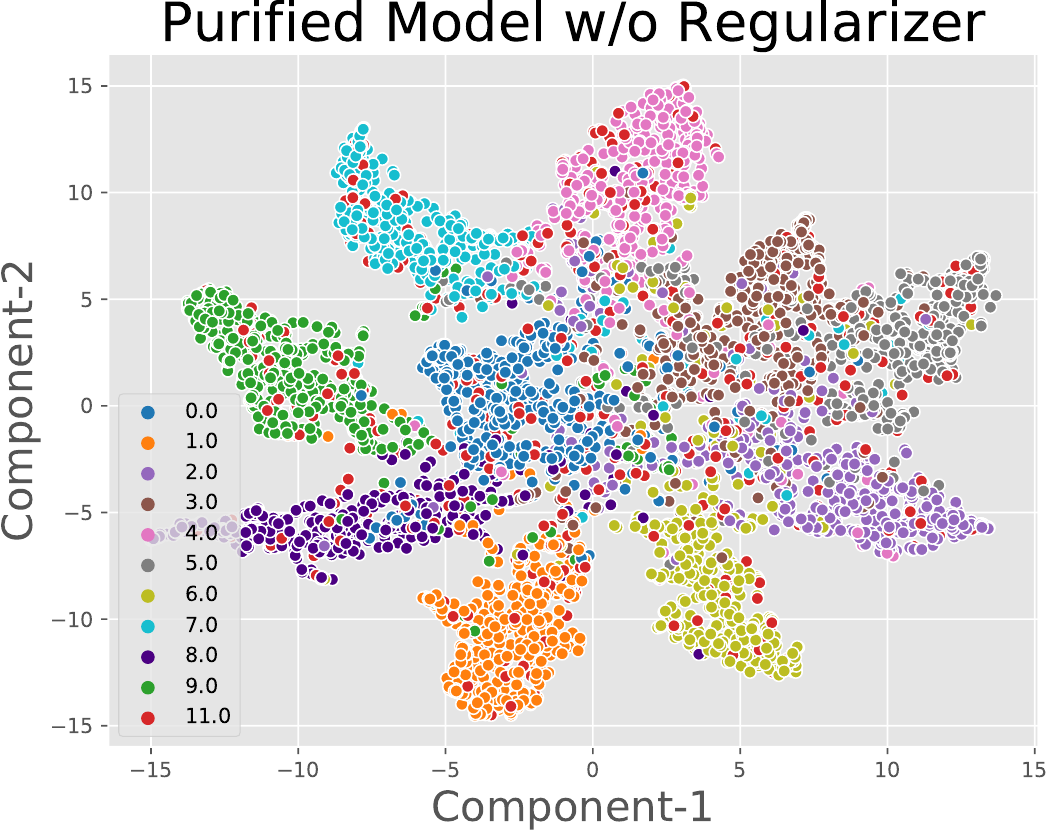}
    \caption{Purification w/o Reg.}
    \label{fig:wo_reg}
  \end{subfigure}
  \begin{subfigure}{0.32\linewidth}
    \includegraphics[width=1\linewidth]{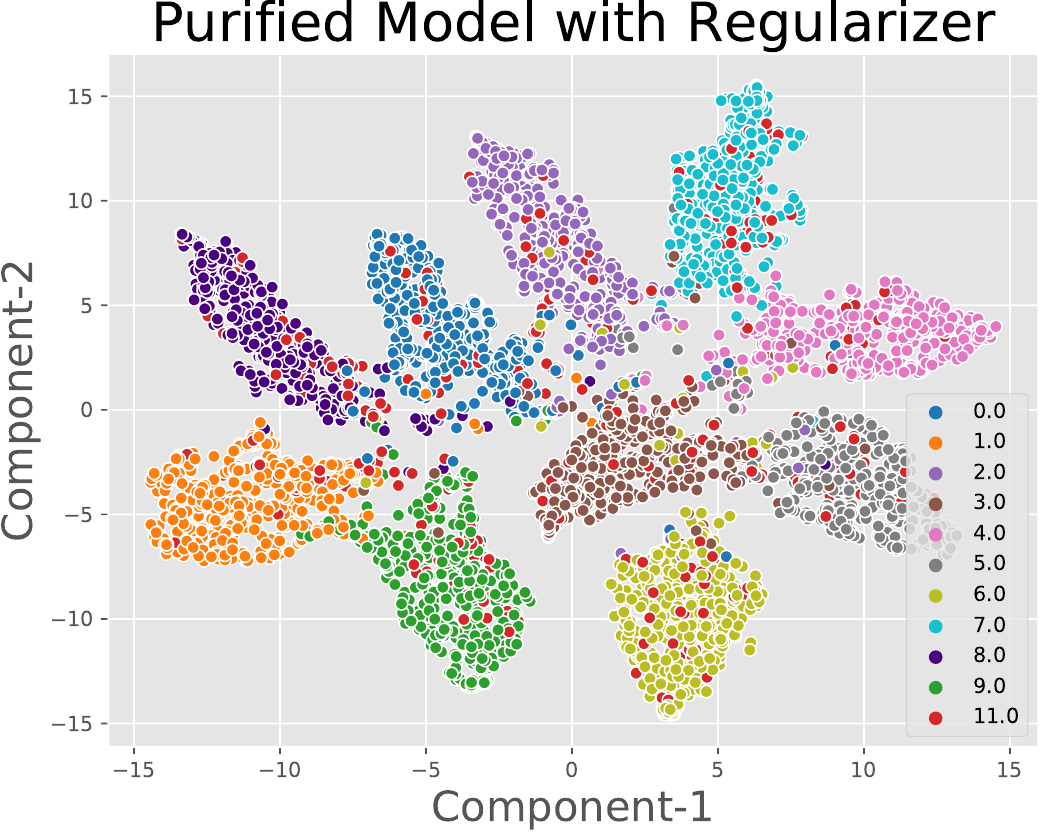}
    \caption{Purification with Reg.}
    \label{fig:with_reg}
  \end{subfigure}
  \caption{ \textbf{t-SNE visualization} of a backdoor model, where we show the ``\emph{poison cluster}'' with red color and label ``11''. Since the attack target class is ``0'', cluster ``0'' and the poison cluster sit closely with each other (Fig.~\ref{fig:tsne_backdoor}). After purification, the cluster should break, and all triggered samples should be classified according to their original label. In Fig.~\ref{fig:wo_reg}, we perform one-shot NFT without employing the regularizer. Due to the overfitting issue, the clean clusters lose their separability that can be established with \emph{Mask regularizer}, which tackles this issue (larger cluster gaps as compared to scenarios in Fig.~\ref{fig:wo_reg}) by keeping purified model parameters close to the original backdoor model (Fig.~\ref{fig:with_reg}). This, in turn, produces better clean test accuracy. For evaluation, we train a PreActResNet18~\cite{he2016identity} on CIFAR10 dataset with a poison rate of 10\%.}
  \label{fig:tSNE_main}
  \vspace{-3mm}
\end{figure}

\begin{table}[t]
\centering
 \caption{ Comparison of different defense methods for \textbf{CIFAR10 and ImageNet}. Average drop ($\downarrow$) indicates the \% changes in ASR/ACC compared to the baseline, \ie, ASR/ACC of \emph{No Defense}. A good defense should have a large \emph{ASR drop} with a small \emph{ACC drop}. Attacks are implemented with a poison rate of $10\%$.
 }
\scalebox{0.69}{
\begin{tabular}{c|c|cc|cc|cc|cc|cc|cc|cc}
\toprule
\multirow{2}{*}{Dataset} & Method & \multicolumn{2}{c|}{\begin{tabular}[c|]{@{}c@{}}No Defense\end{tabular}} & \multicolumn{2}{c|}{ANP} & \multicolumn{2}{c|}{I-BAU} & \multicolumn{2}{c|}{AWM} & \multicolumn{2}{c|}{FT-SAM} & \multicolumn{2}{c|}{RNP} & \multicolumn{2}{|c}{NFT (Ours)}\\ \cmidrule{2-16}
 & Attacks & ASR &ACC & ASR &ACC & ASR &ACC & ASR &ACC & ASR &ACC & ASR &ACC & ASR &ACC \\ \cmidrule{1-16}
\multirow{12}{*}{ CIFAR-10} 
  &  \emph{Benign} & 0 & 95.21 & 0 & 92.28 & 0 & 93.98&0&93.56&0&93.80 & 0 &  93.16 & 0 & \textbf{94.10}  \\
 &   Badnets & 100 & 92.96 & 4.87 & 85.92 & 2.84 & 85.96&5.72&87.85&4.34&86.17 & 2.75 & 88.46  & \textbf{1.74} &\textbf{90.82} \\
 &   Blend & 100 & 94.11  & 4.77 & 87.61 & 3.81 & 89.10&5.53&90.84&2.13& 88.93 & 0.91 & 91.53 &\textbf{ 0.31} & \textbf{93.17} \\
  &  Troj-one & 100 & 89.57& 3.78 & 82.18& 5.47 & {86.20} & 6.91&87.24&5.41&86.45 & 3.84 & 87.39 &\textbf{1.64} & \textbf{87.71} \\
  &  Troj-all & 100 & 88.33 & 3.91 & 81.95& 5.53 & 84.89 & 6.82&85.94&4.42&84.60 & 4.02 & 85.80 & \textbf{1.79} & \textbf{87.10} \\
 &   SIG & 100 & 88.64 & 1.04 & 81.92 & 0.37 & 83.60 &4.12&83.57&0.90&85.38&  0.51 & 86.46 & \textbf{ 0.12} & \textbf{87.16}  \\
  &  Dyn-one & 100 & 92.52 & 4.73 & 88.61 & 1.78 & 87.26 &7.48&\textbf{91.16} &3.35&88.41  & 8.61 & 90.05 & \textbf{1.37} & 90.81\\
 &  Dyn-all & 100 & 92.61 & 4.28 & 88.32 & 2.19 & 84.51 &7.30&89.74&2.46&87.72   & 10.57 & 90.28 & \textbf{1.42} & \textbf{91.53}\\
 &  CLB & 100 & 92.78 & \textbf{0.83} & 87.41 & 1.41 & 85.07 &5.78&86.70&1.89&84.18 & 6.12 & 90.38 & 1.04 & \textbf{90.37}\\
 &  CBA & 93.20 & 90.17 & 27.80 & 83.79 &45.11&85.63&36.12&85.05&38.81&85.58  & 17.72 & 86.40 & \textbf{21.60}&\textbf{87.97}\\
 & FBA & 100&90.78 &7.95 &82.90 &66.70&87.42&10.66&87.35&22.31&87.06 & 9.48 & 87.63 &\textbf{6.21}&\textbf{88.56}\\
 & WaNet&98.64&92.29&5.81&86.70&3.18&89.24&7.72&86.94&2.96&88.45 & 8.10 & \textbf{90.26} &\textbf{2.38}&89.65\\
 & ISSBA&99.80&92.78&6.76&85.42&\textbf{3.82}&89.20&12.48&90.03&4.57&89.59&  7.58 & 88.62 & 4.24&\textbf{90.18}\\
& LIRA & 99.25 & 92.15 & 7.34 & 87.41 & 4.51 & 89.61 & 6.13 & 88.50 & 3.86 & 89.22 & 11.83 & 87.59 &\textbf{1.53}&\textbf{90.57} \\ 
& BPPA & 99.70 & 93.82 & 9.94 & 90.23 & 10.46&90.57&9.94&90.68&10.60&90.88& 9.74 & 91.37&\textbf{5.04}&\textbf{91.78}\\
 \cmidrule{2-16} 
 & {Avg. Drop} & - &-  &  $92.63\downarrow$ & $5.94\downarrow$ &  $88.10\downarrow$ &  $4.66\downarrow$ &$91.21\downarrow$&$3.71\downarrow$&$92.61\downarrow$&$4.26\downarrow$& $92.06\downarrow$ & $2.95\downarrow$ & {\textbf{95.56}} $\downarrow$ &{\textbf{1.81}} $\downarrow$ \\ 
 \cmidrule{1-16}

\multirow{10}{*}{ ImageNet} 
&\emph{Benign}&0&77.06&0&73.52&0&71.85&0&74.21&0&71.63& 0 & 75.20 &0&\textbf{75.51}\\
&Badnets&99.24&74.53&5.91&69.37&6.31&66.28&\textbf{2.87}&69.46&4.18&69.44 & 7.58 & 70.49 &3.61&\textbf{70.96}\\
&Troj-one&99.21&74.02&7.63&69.15&7.73&67.14&5.74&69.35&2.86&70.62 & 2.94 & 72.17 &\textbf{3.16}&\textbf{72.37}\\
&Troj-all&97.58&74.45&9.18&69.86&7.54&68.20&6.02&69.64&3.27&69.85& 4.81 & 71.45 &\textbf{2.68}&\textbf{72.13}\\
&Blend&100&74.42&6.43&70.20&7.79&68.51&7.45&68.61&8.15&68.91& 5.69 & 70.24 &\textbf{3.83}&\textbf{71.52}\\
&SIG&94.66&74.69&\textbf{1.23}&69.82&4.28&66.08&5.37&70.02&3.47&69.74& 4.36 & 70.73&{2.94}&\textbf{72.36}\\
&CLB&95.08&74.14&6.71&69.19&4.37&66.41&7.64&69.70&3.50&69.32& 9.44 & 71.52&\textbf{3.05}&\textbf{72.25}\\
&{Dyn-one}&98.24&74.80&6.68&69.65&8.32&69.61&8.62&70.17&4.42&70.05& 12.56 & 70.39 &\textbf{2.62}&\textbf{71.91}\\
&{Dyn-all}&98.56&75.08&13.49&70.18&9.82&68.92&12.68&70.24&4.81&69.90& 14.18 & 69.47 &\textbf{3.77}&\textbf{71.62}\\
&{LIRA}&96.04&74.61&12.86&69.22&12.08&69.80&13.27&69.35&3.16& 12.31 & 70.50 &\textbf{71.38}&\textbf{2.62}&70.73\\
&
{WaNet}&97.60&74.48&6.34&68.34&5.67&67.23&6.31&70.02&\textbf{4.42}&66.82& 7.78 & 71.62 &4.71&\textbf{71.63}\\
&{ISSBA}&98.23&74.38&7.61&68.42&4.50&67.92&8.21&69.51&3.35&68.02& 9.74 & 70.81&\textbf{2.06}&\textbf{70.67} \\ 
\cmidrule{2-16}  
& {Avg. Drop} & - & -  & 90.08$\downarrow$ &  5.17$\downarrow$ & 88.90$\downarrow$ & 7.41 $\downarrow$ &$90.01\downarrow$&$4.72\downarrow$&$92.24\downarrow$& $5.61\downarrow$ & $89.37\downarrow$ & $3.66\downarrow$ & {\textbf{94.03}} $\downarrow$ &{\textbf{2.84}} $\downarrow$\\ 
\bottomrule
\end{tabular}}
\label{tab:main}
\end{table}

\begin{table}    
    \centering
    \caption{ Performance analysis for the \textbf{multi-label backdoor attack}~\cite{chen2023clean}. We choose 3 object detection datasets~\cite{everingham2010pascal,lin2014microsoft} and ML-decoder~\cite{ridnik2023ml} network architecture for this evaluation. Mean average precision (mAP) and ASR of the model, with and without defenses, have been shown.}
    \scalebox{0.82}{
    \begin{tabular}{l|cc|cc|cc|cc|cc|cc}
        \toprule
        \multirow{2}{*}{Dataset} & \multicolumn{2}{c|}{No defense}  & \multicolumn{2}{c|}{ANP} & \multicolumn{2}{c|}{AWM} & \multicolumn{2}{c|}{RNP} & \multicolumn{2}{c|}{FT-SAM} & \multicolumn{2}{|c}{NFT (Ours)} \\ \cmidrule{2-13}
         & ASR & mAP & ASR & mAP & ASR & mAP & ASR & mAP & ASR & mAP & ASR & mAP \\ \cmidrule{1-13}
        \textbf{VOC07} & 86.4 & 92.5 & 21.7 & 86.9 & 26.6 & 87.3 & 19.2 & 87.6 & 19.3 & 86.8 & \textbf{17.3} & \textbf{89.1} \\
        \textbf{VOC12} & 84.8 & 91.9 & 18.6 & 85.3 & 19.0 & 85.9 & \textbf{13.8} & 86.4 & 14.6 & 87.1 & 14.2 & \textbf{88.4} \\
        \textbf{MS-COCO} & 85.6 & 88.0 & 19.7 & 84.1 & 22.6 & 83.4 & 17.1 & 84.3 & 19.2 & 83.8 &  \textbf{16.6} & \textbf{85.8} \\
        \bottomrule
    \end{tabular}}
    \label{tab:multi_label}
\end{table}

\subsection{Sample Efficiency of NFT} \label{sec:sample_eff_NFT}

In this section, we discuss how careful modifications to the optimization scenario can make NFT highly sample-efficient. For this analysis, we first train a backdoor model (PreActResNet-18~\cite{he2016identity}) on a poisoned CIFAR10 dataset for 200 epochs. For poisoning the dataset, we use TrojanNet~\cite{liu2017trojaning} backdoor attack with a poison rate of 10\%. In Fig.~\ref{fig:tSNE_main}, we show t-SNE visualization of different class clusters obtained using the backdoor model. Instead of 10 clusters, we have an additional cluster (red color, labeled $``11"$) that sits closely with the original target class cluster (in this case, the target class is $``0"$). We name this cluster ``poison cluster'', whereas other clusters are ``clean clusters''. This cluster contains the embeddings of attacked or triggered samples from all other classes. The goal of any defense system is to break the formation of the poison cluster so that poison samples return to their original clusters. 

\noindent \textbf{One-Shot NFT.} Let us consider that there is only 1 sample (per class) available for the validation set $\mathbb{D}_\mathrm{val}$. Applying NFT with this $\mathbb{D}_\mathrm{val}$ forms the clusters shown in Fig.~\ref{fig:wo_reg}. Notice that the poison cluster breaks even with one-shot fine-tuning, indicating the backdoor's effect is removed successfully. However, since only one sample is available per class, the model easily overfits $\mathbb{D}_\mathrm{val}$, reducing margins between clean clusters. Such unwanted \emph{overfitting} issue negatively impacts the clean test accuracy. To combat this issue in scenarios where very few samples are available, we add a simple regularizer term as follows,
\begin{equation}\label{eq:aug_nft}
    \argmin_{M~\mid~m_l^{(i)}\in [\mu (l),1],~\forall l,i} \mathcal{L}^{\mathrm{mix}}({\theta \odot M},\mathbb{D}_\mathrm{val}) + \eta_c ||M_0-M||_1
\end{equation} 
where $M_0$ are the initial mask values (initialized as 1's) and $\eta_c$ is the regularizer coefficient. By minimizing the $\ell_1$-norm of the mask differences, we try to keep the purified model parameters ($\theta \odot M$) close to the original backdoor model parameters ($\theta \odot M_0$). We hope to preserve the original decision boundary between clean clusters by keeping these parameters as close as possible.  Note that the overfitting issue is not as prominent whenever we have a reasonably sized $\mathbb{D}_\mathrm{val}$ (\eg, 1\% of $\mathbb{D}_\mathrm{train}$). Therefore, we choose the value of $\eta_c$ to be $5e^{-4}/n_c$, which dynamically changes based on the number of samples available per class ($n_c$). As the number of samples increases, the impact of the regularizer reduces. \emph{Note that, Eq.~\eqref{eq:aug_nft} represents the final optimization function for our proposed method.}

\section{Experimental Results}\label{sec:exp_result}

\subsection{Evaluation Settings}


\noindent \textbf{Datasets.} We evaluate the proposed method through a range of experiments on two widely used datasets for backdoor attack study: {\bf{CIFAR10}}~\cite{krizhevsky2009learning} with 10 classes, {\bf{GTSRB}}~\cite{stallkamp2011german} with 43 classes. For the scalability test of our method, we also consider {\bf{Tiny-ImageNet}}~\cite{le2015tiny} with 100,000 images distributed among 200 classes and {\bf{ImageNet}}~\cite{deng2009imagenet} with 1.28M images distributed among 1000 classes. For multi-label clean-image backdoor attacks, we use object detection datasets \textbf{Pascal VOC}~\cite{everingham2010pascal}  and \textbf{MS-COCO}~\cite{lin2014microsoft}. {\bf{UCF-101}}~\cite{soomro2012ucf101} and \textbf{HMDB51}~\cite{kuehne2011hmdb} have been used for evaluating in action recognition task. The  \textbf{ModelNet}~\cite{wu20153d} dataset was also utilized to assess the performance of a 3D point cloud classifier. In addition to vision, we also consider attacks on natural language generation and use \textbf{WMT2014 En-De}~\cite{bojar-etal-2014-findings} machine translation and \textbf{OpenSubtitles2012}~\cite{tiedemann2012parallel}
dialogue generation datasets (Results are in the Supplementary).

\begin{table}[t]
    \centering
    \caption{Performance analysis for \textbf{action recognition tasks} where we choose 2 video datasets for evaluation. We consider a clean-label attack~\cite{zhao2020clean}, where we need to generate adversarial perturbations for each input frame.}
    \vspace{-1mm}        
    \scalebox{0.8}{
    \begin{tabular}{l|cc|cc|cc|cc|cc|cc}
        \toprule
        \multirow{2}{*}{Dataset} & \multicolumn{2}{c|}{No defense}  & \multicolumn{2}{c|}{I-BAU} & \multicolumn{2}{c|}{AWM} &  \multicolumn{2}{c|}{RNP} & \multicolumn{2}{c|}{FT-SAM} & \multicolumn{2}{c}{NFT (Ours)} \\ \cmidrule{2-13}
         & ASR & ACC & ASR & ACC & ASR & ACC & ASR & ACC & ASR & ACC & ASR & ACC \\ \cmidrule{1-13}
        UCF-101 & 81.3 & 75.6 & 20.4 & 70.6 & 20.8 & 70.1 & 17.0 & 70.3 & 15.9 & 71.6 &  \textbf{13.3} & \textbf{71.2}  \\
        HMDB-51 & 80.2 & 45.0 & 17.5 & \textbf{41.1} & 15.2 & 40.9 & 12.6 & 40.4 & 10.8 & 41.7 & \textbf{9.4} & 40.8  \\
        \bottomrule
    \end{tabular}}
    \vspace{-1mm}
    \label{tab:action_rec}
\end{table}

\noindent\textbf{Attacks Configurations.} Here, we first briefly overview the attack configurations on single-label image recognition datasets. We consider 14 state-of-the-art backdoor attacks: 1) Badnets~\cite{gu2019badnets}, 2) Blend attack~\cite{chen2017targeted}, 3 \& 4) TrojanNet (Troj-one \& Troj-all)~\cite{liu2017trojaning}, 5) Sinusoidal signal attack (SIG)~\cite{barni2019new}, 6 \& 7) Input-Aware Attack (Dyn-one and Dyn-all)~\cite{nguyen2020input}, 8) Clean-label attack (CLB) ~\cite{turner2019cleanlabel}, 9) Composite backdoor (CBA)~\cite{lin2020composite}, 10) Deep feature space attack (FBA)~\cite{cheng2021deep}, 11) Warping-based backdoor attack (WaNet)~\cite{nguyen2021wanet}, 12) Invisible triggers based backdoor attack (ISSBA)~\cite{li2021invisible}, 13) Imperceptible backdoor attack (LIRA)~\cite{doan2021lira}, and 14) 
Quantization and contrastive learning-based attack (BPPA)~\cite{wang2022bppattack}. In order to facilitate a fair comparison, we adopt trigger patterns and settings similar to those used in the original papers. Specifically, for both Troj-one and Dyn-one attacks, we set all triggered images to have the same target label (\ie, all2one), whereas, for Troj-all and Dyn-all attacks, we have uniformly distributed the target labels across all classes (\ie, all2all). Details on the hyper-parameters and overall training settings can be found in the Supplementary. We measure the success of an attack using two metrics: \emph{clean test accuracy (ACC)} defined as the percentage of clean samples that are classified to their original target label and {\em attack success rate (ASR)} defined as the percentage of poison test samples ($\hat{x}$) that are classified to the target label ($\hat{y}$).

\noindent\textbf{Defenses Configurations.} We compare our approach with 10 existing backdoor mitigation methods:  1) \textit{FT-SAM}~\cite{zhu2023enhancing}; 2) Adversarial Neural Pruning (\textit{ANP}) \cite{wu2021adversarial}; 3) Implicit Backdoor Adversarial Unlearning (\textit{I-BAU})~\cite{zeng2021adversarial}; 4) Adversarial Weight Masking (\textit{AWM})~\cite{chai2022one}; 5) Reconstructive Neuron Pruning (RNP)~\cite{li2023reconstructive}; 6) Fine-Pruning (\textit{FP})~\cite{liu2017neural}; 7) Mode Connectivity Repair (\textit{MCR})~\cite{zhao2020bridging}; 8) Neural Attention Distillation (\textit{NAD})~\cite{li2021neural}, 9) Causality-inspired Backdoor Defense (\textit{CBD})~\cite{zhang2023backdoor}), 10) Anti Backdoor Learning (\textit{ABL})~\cite{li2021anti}. In the main paper, we compare NFT with the first 5 defenses as they are more relevant, and the comparison with the rest of the methods is in \emph{Supplementary.} To apply NFT, we take $1\%$ clean validation data (set aside from the training set) and fine-tune the model for 100 epochs. An SGD optimizer has been employed with a learning rate of 0.05 and a momentum of 0.95. The rest of the experimental details for NFT and other defense methods are in the \emph{Supplementary}.

\subsection{Performance Comparison of NFT}
In this section, we compare the performance of NFT with other defenses in various scenarios: single-label (\ie, image classification), multi-label (\ie, object detection), video action recognition, and 3D point cloud.

\begin{table}[t]
    \centering
    \caption{Removal performance (\%) of NFT against backdoor attacks on \textbf{3D point cloud classifiers}. The attack methods~\cite{li2021pointba}, namely Poison-Label Backdoor Attack (PointPBA) with interaction trigger (PointPBA-I), PointPBA with orientation trigger (PointPBA-O), and Clean-Label Backdoor Attack (PointCBA) were considered, as well as the ``backdoor points'' based attack (3DPC-BA) outlined in prior work~\cite{xiang2021backdoor}.}
    \scalebox{0.8}{
    \begin{tabular}{l|cc|cc|cc|cc|cc|cc}
        \toprule
        \multirow{2}{*}{Attack} & \multicolumn{2}{c|}{No Defense} &  \multicolumn{2}{c|}{ANP} & \multicolumn{2}{c|}{AWM} &  \multicolumn{2}{c|}{RNP} & \multicolumn{2}{c|}{FT-SAM} & \multicolumn{2}{c}{NFT (Ours)} \\ \cmidrule{2-13}
         & ASR & ACC & ASR & ACC & ASR & ACC & ASR & ACC & ASR & ACC & ASR & ACC  \\ \cmidrule{1-13}
        PointBA-I & 98.6 & 89.1 & 13.6 & 82.6 & 15.4 & 83.9 & \textbf{8.1} & 84.0 & 8.8 & 84.5 & 9.6 & \textbf{85.7}  \\
        PointBA-O & 94.7 & 89.8 & 14.8 & 82.0 & 13.1 & 82.4 & 9.4 & 83.8 & 8.2 & 85.0 & \textbf{7.5} & \textbf{85.3} \\
        PointCBA & 66.0 & 88.7 & 21.2 & 83.3 & 21.5 & 83.8 & \textbf{18.6} & 84.6 & 20.3 & 84.7 & 19.4 & \textbf{86.1} \\
        3DPC-BA & 93.8 & 91.2 & 16.8 & 84.7 & 15.6 & 85.9 & 13.9 & 85.7 & 13.1 & 86.3 & \textbf{12.6} & \textbf{87.7}  \\
        \bottomrule
    \end{tabular}}
    \label{tab:3d_point_cloud}
\end{table} 

\noindent \textbf{Single-Label Backdoor attack.} In Table~\ref{tab:main}, we present the performance of different defenses for 2 widely used benchmarks. For CIFAR10, we consider \emph{4 label poisoning attacks: Badnets, Blend, Trojan, and Dynamic}. For all of these attacks, NFT shows significant performance improvement over other baseline methods. While ANP and AWM defenses work well for mild attacks with low poison rates (\eg, 5\%), the performance deteriorates for attacks with high poison rates (\eg, $\geq$ 10\%). It is observable that NFT performs well across all attack scenarios, \eg, obtaining a $99.69\%$ drop in ASR for blend attack while also performing well for clean data (only $0.94\%$ of ACC drop). For Trojan and Dynamic attacks, we consider two different versions of attacks based on label-mapping criteria (all2one and all2all). The drop in attack success rate shows the effectiveness of NFT against such attacks. Recently, \emph{small and imperceptible perturbations} as triggers have been developed in attacks (\eg, WaNet, LIRA, \etc) to fool the defense systems. While AWM generates perturbations as a proxy for these imperceptible triggers, NFT does a better job in this regard. For further validation of our proposed method, we use deep \emph{feature-based attacks} CBA and FBA. Both of these attacks manipulate features to insert backdoor behavior. Overall, we achieve an average drop of $95.56\%$ in ASR while sacrificing an ACC of $1.81\%$. For the scalability test, we consider a large and widely used dataset in vision tasks, ImageNet. In consistency with other datasets, NFT also obtains SOTA performance in this particular dataset. Due to page constraints, we move \emph{the performance comparison for GTSRB and Tiny-ImageNet to the Supplementary.}

\noindent \textbf{Multi-label Backdoor Attack.}
In Table~\ref{tab:multi_label}, we also evaluate our proposed method on multi-label clean-image backdoor attack~\cite{chen2023clean}. In general, we put a trigger on the image and change the corresponding ground truth of that image. However, a certain combination of objects has been used as a trigger pattern. For example, if a combination of car, person, and truck is present in the image, it will fool the model into misclassifying it. Table~\ref{tab:multi_label} shows that our proposed method surpasses other defense strategies concerning both ASR and mAP metrics. Notably, ANP and AWM, which rely on adversarial search limited applicability in multi-label scenarios. This limitation arises from the less accurate process of approximating triggers for object detection. Conversely, FT-SAM's optimization driven by sharpness proves effective in removing backdoors, yet it achieves a lower mAP post-purification. This outcome is not ideal since the goal is to eliminate backdoors without significantly compromising clean accuracy.

\begin{figure}[t]
\begin{minipage}{.475\textwidth}
  \centering
    {\includegraphics[width=0.465\textwidth]{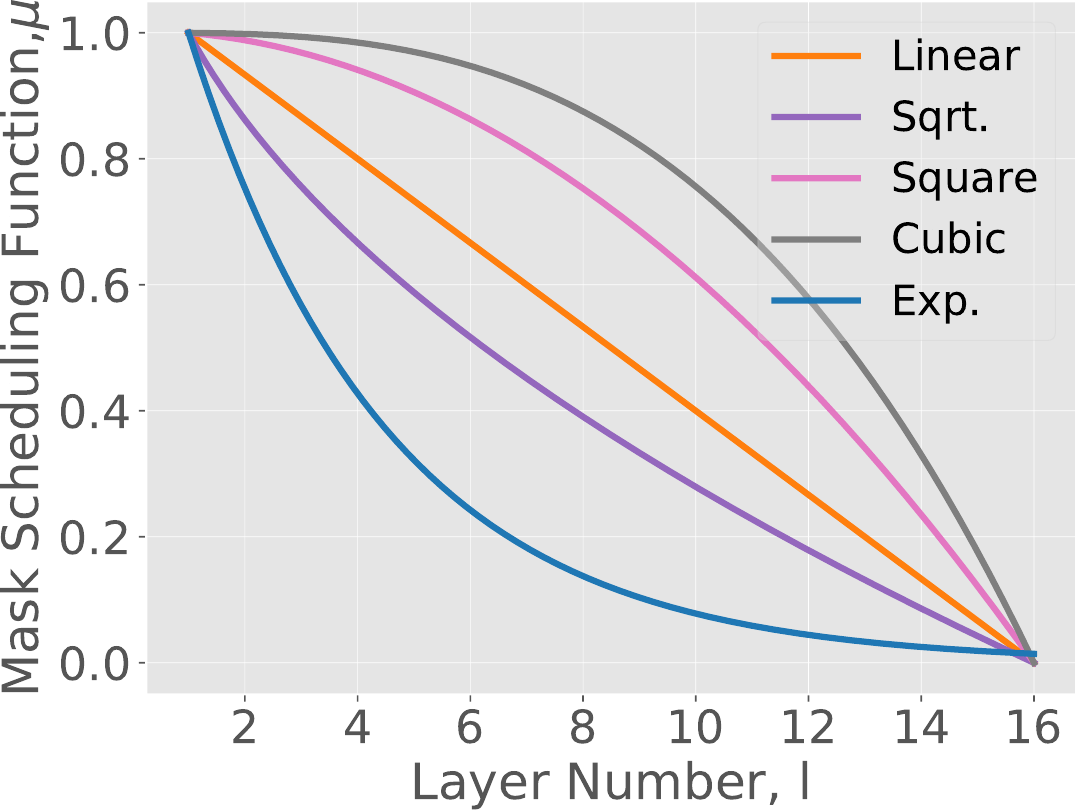}}\hfill
    {\includegraphics[width=0.465\textwidth]{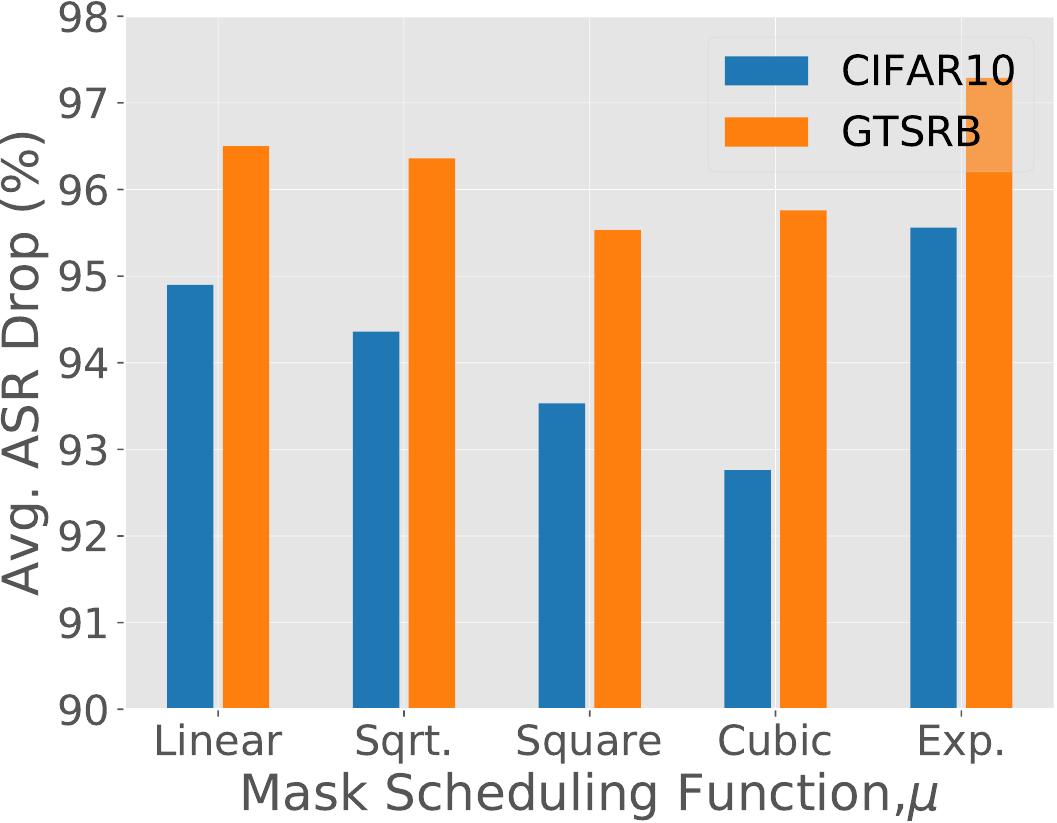}}\hfill
    \caption{ Ablation with different \textbf{Mask Scheduling Function ($\mu$)}. }
  \label{fig:math_func}
\end{minipage}
\hfill
\begin{minipage}{.475\textwidth}
  \centering
    {\includegraphics[width=0.465\textwidth]{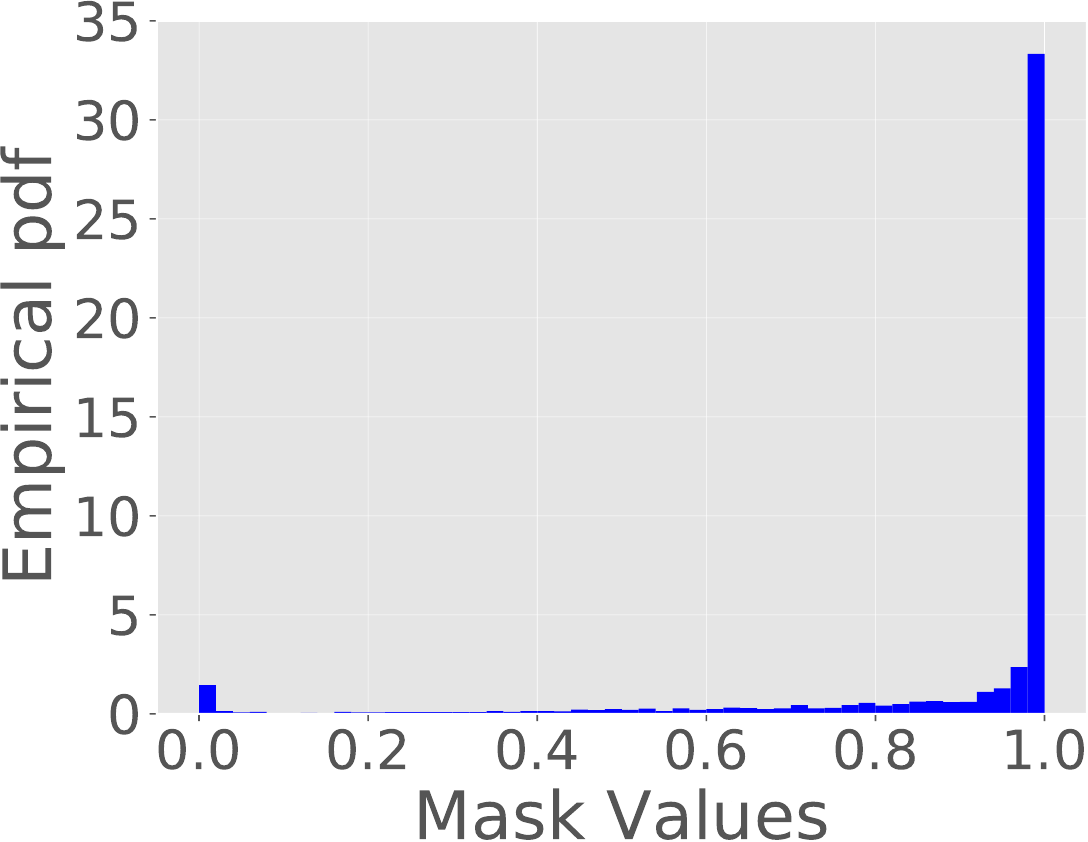}}\hfill
    {\includegraphics[width=0.465\textwidth]{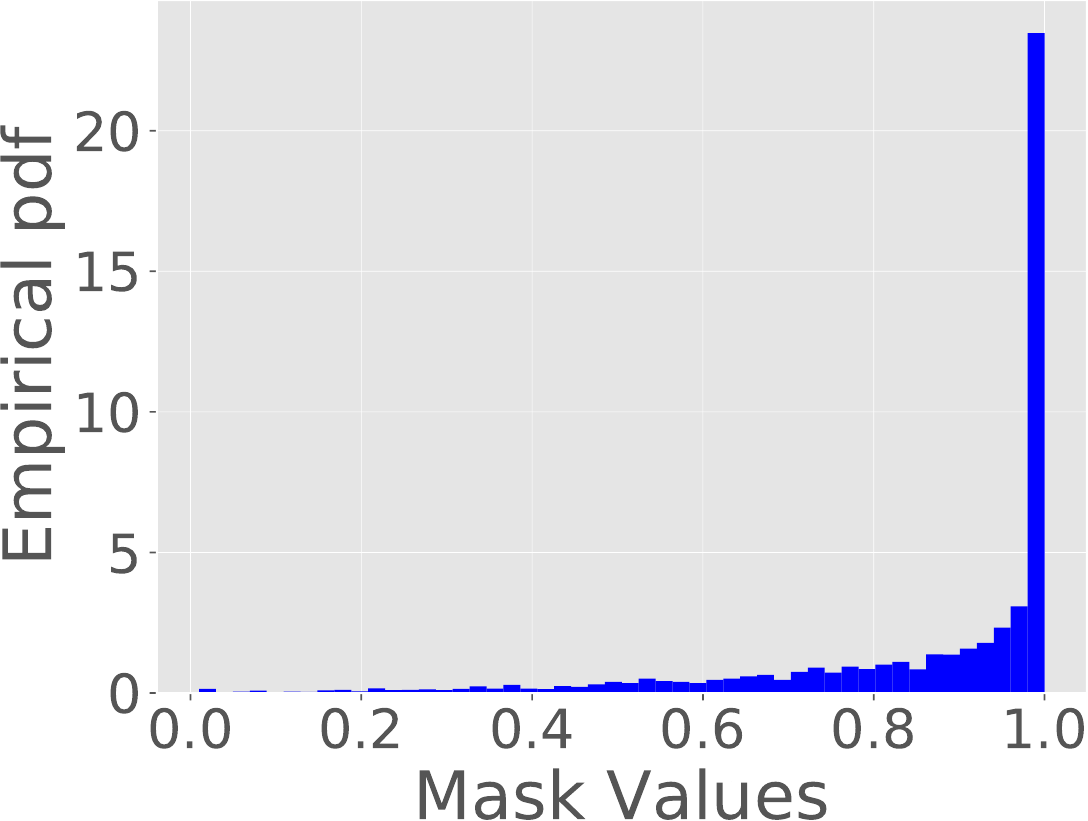}}\hfill
    \caption{ \textbf{Mask Distribution} of AWM (left) and NFT (right).}
  \label{fig:mask_dist}
\end{minipage}
\vspace{-4mm}
\end{figure}

\noindent \textbf{Action Recognition Model.} We further consider attacks on action recognition models; results are reported in Table~\ref{tab:action_rec}. We use two widely used datasets, UCF-101~\cite{soomro2012ucf101} and HMDB51~\cite{kuehne2011hmdb}, with a CNN+LSTM network architecture. An ImageNet pre-trained ResNet50 network has been used for the CNN, and a sequential input-based Long Short Term Memory (LSTM)~\cite{sherstinsky2020fundamentals} network has been put on top of it. We subsample the input video by keeping one out of every 5 frames and use a fixed frame resolution of $224 \times 224$. We choose a trigger size of $20\times20$. Following~\cite{zhao2020clean}, we create the required perturbation for clean-label attack by running projected gradient descent (PGD)~\cite{madry2017towards} for 2000 steps with a perturbation norm of $\epsilon=16$. Note that our proposed augmentation strategies for image classification are directly applicable to video recognition. During training, we keep 5\% samples from each class to use them later as the clean validation set. Table~\ref{tab:action_rec} shows that NFT outperforms other defenses by a significant margin. In the case of a high number of classes and multiple image frames in the same input, it is a challenging task to optimize for the proper trigger pattern through the adversarial search described in I-BAU and AWM. Without a good approximation of the trigger, these methods seem to underperform in most cases.     

\begin{table}[t]
\begin{minipage}{0.48\textwidth}
    \centering
    \caption{  \textbf{Average runtime} of different defense methods against all 14 attacks. An NVIDIA RTX3090 GPU is used. }  
     \scalebox{0.65}{
    \begin{tabular}{c|cccccc}
    \toprule
     Method & ANP & I-BAU & AWM  & FT-SAM & RNP & NFT (Ours) \\
    \midrule
     Runtime (sec.) &  118.1 & 92.5 & 112.5 & 98.7 & 102.6 & \textbf{28.4} \\
    \bottomrule
    \end{tabular}}
    \label{tab:runtime_analysis}
\end{minipage}
\hfill
\begin{minipage}{0.48\textwidth}
    \centering
    \caption{\textbf{Sensitivity analysis of $\alpha$ and $\beta$} for LIRA on CIFAR10. }
     \scalebox{0.625}{
    \begin{tabular}{c|c|c|c|c|c|c|c|c|c}
    \toprule
     $\alpha$ & 0.9 & 0.9 & 0.9 & 0.8 & 0.8 & 0.8 & 0.7 & 0.7 & 0.7   \\
     \midrule
     $\beta$ & 0.25 & 0.50 & 0.75 & 0.25 & 0.50 & 0.75 & 0.25 & 0.50 & 0.75 \\
     \midrule
     ASR & 3.65 & 3.42 & 4.87 & 2.73 & \textbf{1.53} & 3.76 & 4.18 & 4.92 & 5.23    \\
     ACC & 88.52 & 88.34 & 89.12 & 89.97 & \textbf{90.57} & 89.34 & 87.64 & 87.78 & 88.10\\   
    \bottomrule
    \end{tabular}}
    \label{tab:sensitivity_alpha}
\end{minipage}
\vspace{-3mm}
\end{table}

\noindent \textbf{3D Point Cloud.} In this phase of our work, we assess NFT's resilience against attacks on 3D point cloud classifiers~\cite{li2021pointba,xiang2021backdoor}. To evaluate, we utilize the ModelNet dataset~\cite{wu20153d} and the PointNet++ architecture ~\cite{qi2017pointnet++}. The performance comparison of NFT and other defense methods is outlined in Table~\ref{tab:3d_point_cloud}. NFT outperforms other defenses due to its unique formulations of the objective function.

\subsection{Ablation Study}\label{sec:ablation}
For all ablation studies, we consider the CIFAR10 dataset.

\noindent \textbf{Runtime Analysis.} For runtime analysis, we present the training time for different defenses in Table~\ref{tab:runtime_analysis}. ANP and AWM both employ computationally expensive adversarial search procedures to prune neurons, which makes them almost 4x slower than our method. However, NFT offers a computationally less expensive defense with SOTA performance in major benchmarks. 

\noindent\textbf{Choice of Scheduling Function, $\mu$.} For choosing the suitable function for $\mu$, we conduct a detailed study with commonly used mathematical functions. Note that, we only consider a family of functions that decreases over the depth of the network (shown in Fig.~\ref{fig:math_func}). This allows more variations for deeper layer weights, making sense as they are more responsible for DNN decision-making. In our work, we choose an exponential formulation for $\mu$ as it offers superior performance. We also perform sensitivity analysis of scheduling parameters $\alpha$ and $\beta$ in Table~\ref{tab:sensitivity_alpha}. In Fig.~\ref{fig:mask_dist}, we show the generated mask distributions of AWM and NFT. Compared to AWM, NFT produces more uniformly distributed masks that seem helpful for backdoor purification. We show the impact of $\mu$ in Table~\ref{tab:val_size}.

\noindent\textbf{Nature of Optimization.} In Table~\ref{tab:val_size}, we present the performance of SOTA techniques under \emph{different validation sizes}. Even with 10 samples (single-shot), NFT performs reasonably well and offers better performance as compared to AWM. This again shows that the trigger generation process is less accurate and effective for a very small validation set. We also show the \emph{effect of the proposed mask regularizer} that indirectly controls the change in weights for better ACC. Although AWM employs a similar $\ell_1$ regularizer for masks, our proposed regularizer is more intuitive and specifically designed for better ACC preservation. While AWM encourages sparse solutions for $M$ (shown in the left subfigure of Fig.~\ref{fig:mask_dist}), it helps with ASR but heavily compromises ACC. We also show the performance of NFT without augmentations. 

\begin{table}
    \centering
    \caption{ Purification performance (\%) for \textbf{various validation data sizes}. NFT performs reasonably well even with as few as 10 samples, \ie, one sample (shot) per class for CIFAR10. We also show the impact of the \textbf{ mask regularizer}, \textbf{mask scheduling function $\mu$}, and \textbf{augmentations} on performance, which resonates with Fig.~\ref{fig:tSNE_main}. Mask regularizer has the most impact on the clean test accuracy (around 7\% worse without the regularizer). Without strong augmentations, we have a better ACC with a slightly worse ASR (around 6\% drop). }
    \scalebox{0.85}{
    \begin{tabular}{c|cc|cc|cc|cc|cc|cc}
    \toprule
         Attack & \multicolumn{4}{c|}{Dynamic}  & \multicolumn{4}{c|}{WaNet} & \multicolumn{4}{c}{LIRA}  \\
         \midrule
         Samples & \multicolumn{2}{c|}{10}  & \multicolumn{2}{c|}{100} & \multicolumn{2}{c|}{10}  & \multicolumn{2}{c|}{100} & \multicolumn{2}{c|}{10} & \multicolumn{2}{c}{100} \\
         \midrule
         Method & ASR & ACC& ASR & ACC& ASR & ACC& ASR & ACC& ASR & ACC& ASR & ACC\\
         \midrule
         \emph{No Defense} &  100 & 92.52 &  100 & 92.52  & 98.64 & 92.29  &98.64 & 92.29 & 99.25 & 92.15 & 99.25 & 92.15 \\
         AWM  & 86.74 & 55.73 & 9.16 & 85.33  & 83.01 & 62.21 & 7.23 & 84.38 & 91.45 & 66.64 & 10.83 & 85.87 \\
         FT-SAM  & 8.35 & 73.49 & 5.72 & 84.70  & 9.35 & 75.98 & 5.56 & 86.63 & 11.83 & 72.40 & 4.85 & 88.82  \\
         \midrule
         NFT w/o Reg.  & 5.67 & 76.74 & \textbf{1.36} & 82.21  & \textbf{4.18} & 76.72 & 3.02 & 83.31 & \textbf{4.83} & 74.58 & 2.32 & 83.61  \\
         NFT w/o Aug. & 11.91 & \textbf{81.86} & 10.59 & 89.53  & {10.36} & 83.10 & 7.81 & \textbf{89.68} & {12.23} & 81.05 & {9.16} & 88.74 \\
         NFT w/o $\mu(l)$ & 5.11 & 80.32 & 3.04 & 88.58  & {5.85} & 82.46 & 4.64 & 88.02 & {6.48} & 81.94 & {4.33} & 88.75 \\
         NFT  & \textbf{4.83} & 80.51 & 1.72 & \textbf{90.08}  & {4.41} & \textbf{83.58} & \textbf{2.96} & 89.15 & {5.18} & \textbf{82.72} & \textbf{2.04} & \textbf{89.34} \\
         \bottomrule
    \end{tabular}}
    \label{tab:val_size}
\end{table}

\noindent\textbf{Label Correction Rate.} In the standard backdoor removal metric, it is sufficient for backdoored images to be classified as a non-target class (any class other than $\hat{y}$). However, we also consider another metric, label correction rate (LCR), for quantifying the success of a defense. \emph{We define LCR as the percentage of poisoned samples correctly classified to their original classes}. Any method with the highest value of LCR is considered to be the best defense method. For this evaluation, we use CIFAR10 dataset and 6 backdoor attacks. Initially, the correction rate is 0\% with no defense as the ASR is close to 100\%. Table~\ref{tab:correction_rate} shows that NFT obtains better performance in terms of LCR.

\begin{table}[t]
    \centering
    \caption{ \textbf{Label Correction Rate} (\%) for different defense techniques, defined as the percentage of backdoor samples that are correctly classified to their original ground truth label.}  
    \scalebox{0.85}{
    \begin{tabular}{c|c|c|c|c|c|c}
    \toprule
         Defense & Badnets&Trojan&Blend&SIG&CLB & WaNet \\ \midrule
         No Defense& 0&0&0&0&0 & 0\\
         \midrule
         ANP &84.74&80.52&81.38&53.35&82.72 & 80.23\\
         I-BAU &78.41&77.12&77.56&39.46&78.07 & 80.65 \\
         AWM&79.37&78.24&79.81&44.51&79.86 & 79.18\\
         FT-SAM &85.56&80.69&84.49&\textbf{57.64}&82.04 & 83.62\\\midrule
         NFT (Ours)&\textbf{86.82}&\textbf{81.15}&\textbf{85.61}&{55.18}&\textbf{86.23}&\textbf{85.70}\\
         \bottomrule
    \end{tabular}}
    \label{tab:correction_rate}
\end{table}

\section{Conclusion}
We proposed a backdoor purification framework, NFT, utilizing an augmentation-based neural mask fine-tuning approach. NFT can change the backdoor model weights in a computationally efficient manner while ensuring SOTA purification performance. Our proposed method showed that the addition of MixUp during fine-tuning replaces the need for a computationally expensive trigger synthesis process. Furthermore, we proposed a novel mask regularizer that helps us preserve the cluster separability of the original backdoor model. By preserving this separability, the proposed method offers better clean test accuracy compared to SOTA methods. Furthermore, we suggested using a mask scheduling function that reduces the mask search space and improves the computational efficiency further. Our extensive experiments on 5 different tasks validate the efficiency and efficacy of the proposed backdoor purification method. We also conducted a detailed ablation study to explain the reasoning behind our design choices.



\section*{Acknowledgements}
This work was supported in part by the National Science Foundation under Grant ECCS-1810256 and CMMI-2246672.

\bibliographystyle{splncs04}
\bibliography{egbib}

\appendix
\renewcommand{\thesection}{Appendix \Alph{section}}

\section*{Overview}

The overview of our supplementary is as follows:

\begin{itemize}
    \item We provide proof for Theorem~1 in Section~\ref{sec:theorem}.
    \item Section~\ref{sec:experiments} contains the experimental setting of different backdoor attacks, NFT, and other baselines. 
    \item Section~\ref{sec:add_exp_results} contains the additional experimental results where we present the comparison for \emph{GTSRB and Tiny-ImageNet} in Section~\ref{sec:gtsrb_and_tiny}, results for \emph{natural language generation tasks} in Section~\ref{sec:nlg}, and comparison with \emph{additional SOTA defense methods} in Section~\ref{sec:additional_defense}. In our work, we use MixUp as data augmentation. However, we show the performance of NFT with \emph{other popular augmentation strategies} in Section~\ref{sec:common_aug}.   
    \item  We present more ablation study in Section~\ref{sec:more_ablation} where we show the performance of \textit{Adaptive Attacks}, \emph{One-Shot  NFT for the other 3 datasets}, \emph{impact of clean validation data size, Impact of $\eta_c$, Layerwise Mask Heamaps, augmented defenses, \etc}. An ablation study with different poison rates has also been presented.
\end{itemize}

\section{Theoretical Justifications.}\label{sec:theorem}

\noindent \textbf{Proof of Theorem 1.} For a fully-connected neural network (NN) with logistic loss \(\ell (y, f_\theta(x)) = \log(1+\exp{(f_\theta(x))}) - y f_\theta(x)\) with \(y \in \{0,1\}\), it can be shown that $\mathcal{L}^{\mathrm{mix}}(\theta,\mathbb{D}_\mathrm{val} )$ is an upper-bound of the second order Taylor series expansion of the ideal loss $\mathcal{L}^{\mathrm{ideal}} (\theta, \mathbb{D}_\mathrm{val})$. With the nonlinearity $\sigma$ for ReLU and max-pooling in NN, the function $f_\theta$ satisfies that $f_\theta(x) = \nabla f_\theta(x)^T x$ and $\nabla^2 f_\theta(x) = 0$ almost everywhere, where the gradient is taken with respect to the input $x$. 

We first rewrite the $\mathcal{L}^{\mathrm{ideal}} (\theta, \mathbb{D}_\mathrm{val})$ using Taylor series approximation. The second-order Taylor expansion of $\ell(y, f_\theta(x + \delta))$ is given by,
\begin{equation*}
    \ell(y, f_\theta(x + \delta)) = \ell(y, f_\theta(x)) + (g(f_\theta(x)) -y)(f_\theta(\delta)) + \frac{1}{2}g(f_\theta(x))(1-g(f_\theta(x)))(f_\theta(\delta))^2,
\end{equation*} 
where \(g(x) = \frac{e^x}{1+e^x}\) is the logistic function. 

Now using $f_\theta(\delta) = \nabla f_\theta (\delta)^T \delta \leq ||\nabla f_\theta (\delta)||_2 \cdot ||\delta||_2$, we get
\begin{equation*}
    \begin{aligned}
    \ell(y, f_\theta(x + \delta)) 
    \leq &~\ell(y, f_\theta(x)) + ||\delta||_2 \cdot |(g(f_\theta(x)) -y)|\cdot||\nabla f_\theta (\delta)||_2 \\
    &+ \frac{||\delta||^2_2}{2}|g(f_\theta(x))(1-g(f_\theta(x)))| \cdot||\nabla f_\theta (\delta)||^2_2     
\end{aligned}
\end{equation*}

Notice that the goal of ideal loss $\mathcal{L}^{\mathrm{ideal}} (\theta, \mathbb{D}_\mathrm{val})$ is to refine the model such that the model predicts $y$ for input $x$ or $x+\delta$, implying that the impact of model's gradient corresponding to $\delta$, $\nabla f_\theta(\delta)$, is sufficiently less than the model's gradient corresponding to $x$, $\nabla f_\theta(x)$, \ie, $\nabla f_\theta(\delta) \leq \nabla f_\theta(x)$. Therefore,

\begin{equation}\label{eq:idealT}
    \begin{aligned}
    \ell(y, f_\theta(x + \delta)) \leq &~\ell(y, f_\theta(x)) + ||\delta||_2 \cdot |(g(f_\theta(x)) -y)|\cdot||\nabla f_\theta (x)||_2 \\
    &+ \frac{||\delta||^2_2}{2}|g(f_\theta(x))(1-g(f_\theta(x)))| \cdot||\nabla f_\theta (x)||^2_2     
\end{aligned}
\end{equation}

Based on the MixUp related analysis in prior works~\cite{carratino2022mixup,zhang2020does}, the following can be derived for \(\mathcal{L}^{\mathrm{mix}}(\theta,\mathbb{D}_\mathrm{val} )\) using the second-order Taylor series expansion,

\noindent\textbf{Lemma 1.}
\emph{Assuming $f_\theta(x) = \nabla f_\theta(x)^T x$ and $\nabla^2 f_\theta(x) = 0$ (which are satisfied by ReLU and max-pooling activation functions), \(\mathcal{L}^{\mathrm{mix}}(\theta,\mathbb{D}_\mathrm{val} )\) can be expressed as,
\begin{equation}
    \mathcal{L}^{\mathrm{mix}}(\theta,\mathbb{D}_\mathrm{val} ) = \mathcal{L}(\theta,\mathbb{D}_\mathrm{val} ) + \mathcal{R}_1(\theta, \mathbb{D}_\mathrm{val}) +\mathcal{R}_2(\theta, \mathbb{D}_\mathrm{val})
\end{equation}  
where,
\[
\mathcal{R}_1(\theta, \mathbb{D}_\mathrm{val}) \geq \frac{Rc_x\E_\lambda[(1-\lambda)]\sqrt{d}}{N_\mathrm{val}} \sum_{i=1}^{N_\mathrm{val}}|g(f_\theta(x_i)) - y_i |\cdot||\nabla f_\theta(x_i)||_2
\]
\[
\mathcal{R}_2(\theta, \mathbb{D}_\mathrm{val}) \geq \frac{R^2c_x^2\E_\lambda[(1-\lambda)]^2{d}}{2N_\mathrm{val}} \sum_{i=1}^{N_\mathrm{val}}|g(f_\theta(x_i))(1 - g(f_\theta(x_i)))|\cdot||\nabla f_\theta(x_i)||_2^2,
\]
where \(R=\min_{i\in [N_\mathrm{val}]}\langle\nabla f_\theta(x_i), x_i\rangle/||\nabla f_\theta(x_i)||\cdot||x_i||\) and \(c_x > 0\) is a constant. 
}

By comparing $\ell(y, f_\theta(x + \delta))$ and $\mathcal{L}^{\mathrm{mix}}(\theta,\mathbb{D}_\mathrm{val} )$ for a fully connected NN, we can prove the following.

\noindent\textbf{Theorem 1.}
\emph{Suppose that $f_\theta(x) = \nabla f_\theta(x)^T x$, $\nabla^2 f_\theta(x) = 0$ and there exists a constant $c_x > 0$ such that $\|x_i\|_2 \geq c_x \sqrt{d}$ for all $i \in \{1, \ldots, N_{\mathrm{val}}\}$. Then, for any $f_\theta$, we have
\[
\mathcal{L}^{\mathrm{mix}}(\theta,\mathbb{D}_\mathrm{val}) \geq \frac{1}{N_{\mathrm{val}}} \sum_{i=1}^{N_{\mathrm{val}}} \ell\left(y_i, f_\theta(x_i+\varepsilon_i)\right) \geq \frac{1}{N_{\mathrm{val}}} \sum_{i=1}^{N_{\mathrm{val}}} \ell\left(y_i, f_\theta(x_i+\varepsilon)\right)
\]
where $\varepsilon_i = R_i c_x \E_{\lambda \sim \mathcal{D}_{\lambda}}[(1 - \lambda)]\sqrt{d}$ with $R_i = \langle\nabla f_\theta(x_i), x_i\rangle/||\nabla f_\theta(x_i)||\cdot||x_i||$ and $\varepsilon = \min\{\varepsilon_i\}$.
}

\begin{proof}
From Lemma 1, we get
\begin{equation*}
    \begin{aligned}
      \mathcal{L}^{\mathrm{mix}}(\theta,\mathbb{D}_\mathrm{val}) \geq &\mathcal{L}(\theta,\mathbb{D}_\mathrm{val}) + \frac{Rc_x\E_\lambda[(1-\lambda)]\sqrt{d}}{N_\mathrm{val}} \sum_{i=1}^{N_\mathrm{val}}|g(f_\theta(x_i)) - y_i |\cdot||\nabla f_\theta(x_i)||_2  \\
      &+ \frac{R^2c_x^2\E_\lambda[(1-\lambda)]^2{d}}{2N_\mathrm{val}} \sum_{i=1}^{N_\mathrm{val}}|g(f_\theta(x_i))(1 - g(f_\theta(x_i)))|\cdot||\nabla f_\theta(x_i)||_2^2 \\
      \overset{{(*)}}{\geq} &  \frac{1}{N_{\mathrm{val}}}\sum_{i=1}^{N_{\mathrm{val}}}\ell(y_i, f_\theta(x_i + \varepsilon))=\mathcal{L}^{\mathrm{ideal}} (\theta, \mathbb{D}_\mathrm{val}) 
    \end{aligned}
\end{equation*}
where step $(*)$ follows directly using Eq.~\eqref{eq:idealT} and $\varepsilon = R c_x \E_{\lambda \sim \mathcal{D}_{\lambda}}[(1 - \lambda)]\sqrt{d}$.
\end{proof}

Theorem 1 implies that as long as \(||\delta||_2 \leq \varepsilon\) holds, the MixUp loss \(\mathcal{L}^{\mathrm{mix}}(\theta,\mathbb{D}_\mathrm{val})\) can be considered as an upper-bound of \(\mathcal{L}^{\mathrm{ideal}}(\theta,\mathbb{D}_\mathrm{val})\). Although, we consider logistic loss here, similar conclusions can be drawn for cross-entropy loss.

\section{Experimental Settings}~\label{sec:experiments}
The CIFAR-10~\cite{krizhevsky2009learning} dataset consists of $60,000$ color images, which are classified into 10 classes. There are $50,000$ training images and $10,000$ test images for each class. GTSRB~\cite{stallkamp2011german} is also an image classification dataset. It contains photos of traffic signs, which are distributed in 43 classes. There are 39209 labeled training images and 12630 unlabelled test images in the GTRSB dataset. We rescale the GTSRB images to $32\times32$. Training hyperparameters details can be found in Table~\ref{tab:hyperparameters}-\ref{tab:hyperparameters_tiny_Imagnet}. We use NVIDIA RTX 3090 GPU for all experiments.

\begin{table}
\centering
\caption{Training \textbf{Hyper-Parameters} for CIFAR10 and GTSRB}

\scalebox{0.8}{
\begin{tabular}{c|c}
\toprule
\textbf{Hyper Parameters} & \textbf{Values} \\
\midrule
Image Size & $32\times32$ \\
Initial Learning Rate & $5e^{-2}$ \\
Momentum & 0.9 \\
Weight Decay & $5e^{-4}$ \\
Normalization (CIFAR10) &  Mean - [0.4914, 0.4822, 0.4465],  \hspace{2mm}\\
&Std. dev. - [0.2023, 0.1994, 0.2010] \\
Normalization (GTSRB) & None \\
Batch Size & 128 \\
Number of Training Epochs & 100 \\
\bottomrule
\end{tabular}}

\label{tab:hyperparameters}
\end{table}

\begin{table}
\centering
\caption{Training \textbf{Hyper-Parameters} for Tiny-ImageNet/ ImageNet. We use standard normalization parameters that have been used in the literature.}

\scalebox{0.85}{
\begin{tabular}{c|c}
\toprule
\textbf{Hyper Parameters} & \textbf{Values} \\
\midrule
Image Size & $64\times64$ and $224\times224$ \\
Initial Learning Rate & $1e^{-2}/1e^{-3}$ \\
Momentum & 0.9 \\
Weight Decay & $5e^{-4}$ \\
Normalization (Tiny-ImageNet) & Standard \\
Normalization (ImageNet) & Standard \\
Batch Size & 128/32 \\
Number of Training Epochs & 10/2 \\
\bottomrule
\end{tabular}}
\label{tab:hyperparameters_tiny_Imagnet}
\end{table}

\subsection{Attack Implementation Details}
Following our attack model, we create the triggered input as,
\(\hat{x}_i =  x_i +\delta,\) 
  where $\delta \in \mathbb{R}^d$ represents trigger pattern and the target label $\hat{y}_i \neq y_i$ (set by the adversary).
 Depending on the type of trigger, poison rate ($|\mathbb{D}'_{\mathsf{train}}|/|\mathbb{D}_{\mathsf{train}}|$) and label mapping ($\hat{y}_{i} \rightarrow y_i$), one can formulate the different type of backdoor attacks. In our work, we create 14 different backdoor attacks based on the trigger types, label-poisoning type, label mapping type, \etc For most types of attacks, we use a poison rate (ratio of poison data to training data) of 10\%. The details of the attacks are given below:

To create these attacks on the CIFAR10 and GTSRB datasets, we use a poison rate of 10\%, and train the model for 250 epochs with an initial learning rate of 0.01. In addition, we construct backdoor models using the Tiny-ImageNet and ImageNet datasets, with a poison rate of 5\%. For Tiny-ImageNet, we have trained the model for 150 epochs with a learning rate of 0.005, and a decay rate of 0.1/60 epochs.

\begin{table}[t]
\centering
 \caption{ Performance of NFT on a dataset with a large number of classes, \textbf{Tiny-ImageNet}. We employ ResNet34 architecture here with a poison rate of 10\%. Average drop ($\downarrow$) indicates the \% changes in ASR/ACC compared to the baseline, \ie ASR/ACC of \emph{No Defense}. A higher ASR drop and lower ACC drop are desired for a good defense. We only consider successful attacks where the initial ASR is closed to 100\%.}
 
\scalebox{0.75}{
\begin{tabular}{c|cc|cc|cc|cc|cc|cc|cc}
\toprule
 Method & \multicolumn{2}{c|}{\begin{tabular}[c|]{@{}c@{}}No Defense\end{tabular}} & \multicolumn{2}{c|}{ANP} & \multicolumn{2}{c|}{I-BAU} & \multicolumn{2}{c|}{AWM} & \multicolumn{2}{c|}{FT-SAM} & \multicolumn{2}{c|}{RNP} &  \multicolumn{2}{c}{NFT  (Ours)}\\ \cmidrule{1-15}
  Attacks & ASR &ACC & ASR &ACC & ASR &ACC & ASR &ACC & ASR &ACC & ASR &ACC & ASR &ACC \\ \cmidrule{1-15}

\emph{Benign}&0&62.56&0&58.20&0&59.29&0&59.34&0&59.08& 0 & 58.14  &0&\textbf{59.67}\\
Badnets&100&59.80&5.84&53.58&4.23&55.41&6.29&54.56&3.44&54.81&  4.63 & 55.96 & \textbf{2.34}&\textbf{57.84}\\
Trojan&100&59.16&6.77&52.62&7.56&54.76&5.94&\textbf{56.10}&8.23&55.28 & 5.83 & 54.30 & \textbf{3.38}&55.87\\
Blend&100&60.11&6.18&52.22&6.58&55.70&7.42&54.19&4.37&55.78 & 4.08 & 55.47  &\textbf{1.58}&\textbf{57.48}\\
SIG&98.48&60.01&7.02&52.18&3.67&54.71&7.31&\textbf{55.72}&4.68&55.11& 6.71 & 55.22 &\textbf{2.81}&55.63\\
CLB&97.71&60.33&5.61&52.68&3.24&55.18&6.68&54.93&3.52&55.02  & 4.87 & 56.92 &\textbf{1.06}&\textbf{57.40} \\ 
Dynamic& 100& 60.54& 6.36 &52.57& 5.56&55.03 &6.26&54.19&4.26&55.21& 7.23 & 55.80&\textbf{2.24}&\textbf{57.78}\\
{WaNet}&99.16&60.35&7.02&52.38&8.45&55.65&8.43&\textbf{56.32}&7.84&55.04& 5.66 & 55.19&\textbf{3.48}&56.21\\
{ISSBA}&98.42&60.76&\textbf{1.26}&53.41&8.64&55.36&7.47&55.83&6.72&56.32& 8.24 & 55.35&{2.25}&\textbf{57.80} \\ 
{BPPA}&98.52&60.65&10.23&53.03&7.62&55.63&4.85&55.03&5.34&55.48 & 10.86 & 56.32 &\textbf{3.41}&\textbf{57.39} \\
\cmidrule{1-15} 
{Avg. Drop} & - & -  & 92.61 $\downarrow$ & 7.44 $\downarrow$ & 92.97$\downarrow$ & 4.92 $\downarrow$ &$93.29\downarrow$&$4.98\downarrow$&$93.77\downarrow$&$4.85\downarrow$ & 92.69 $\downarrow$ &4.58 $\downarrow$ & {\textbf{96.64}} $\downarrow$ &{\textbf{3.15}} $\downarrow$\\ 
\bottomrule
\end{tabular}}
 \label{tab:tiny_main}
\end{table}

\begin{table*}[t]
\centering
\caption{Performance of NFT on \textbf{GTSRB dataset}. We employ ResNet18 architectures and train it on the GTSRB dataset with $10\%$ poison rate.}  
\scalebox{0.75}{
\begin{tabular}{c|cc|cc|cc|cc|cc|cc|cc}
\toprule
 Method & \multicolumn{2}{c|}{\begin{tabular}[c|]{@{}c@{}}No Defense\end{tabular}} & \multicolumn{2}{c|}{ANP} & \multicolumn{2}{c|}{I-BAU} & \multicolumn{2}{c|}{AWM} & \multicolumn{2}{c|}{FT-SAM} & \multicolumn{2}{c|}{RNP} &  \multicolumn{2}{c}{NFT  (Ours)}\\ \cmidrule{1-15}
 Attacks & ASR &ACC & ASR &ACC & ASR &ACC & ASR &ACC & ASR &ACC & ASR &ACC & ASR &ACC \\ \cmidrule{1-15}
 
 \emph{Benign} & 0 & 97.87 & 0&93.08 & 0 & 95.42&0&96.18&0&95.32  & 0  & 95.64 & 0 & \textbf{95.76}\\
   Badnets & 100 & 97.38 & 1.36&88.16 & 0.35 & 93.17 &2.72&94.55&2.84&93.58 & 3.93 & 94.57 & \textbf{0.24}  & \textbf{95.11} \\
   Blend & 100 & 95.92 & 6.08&89.32 & 4.41 & 93.02 &4.13&\textbf{94.30}&4.96&92.75 & 5.85 & 93.41  & \textbf{2.91} & {93.31}  \\
   Troj-one & 99.50 & 96.27 & 5.07&90.45 & 1.81 & 92.74&3.04&93.17&2.27&93.56 & 4.18 & 93.60  & \textbf{ 1.21} &  \textbf{94.18} \\
    Troj-all & 99.71 & 96.08 & 4.48&89.73& 2.16 & 92.51 &2.79&93.28&1.94&92.84 &  4.86 & 92.08 & \textbf{1.58} & \textbf{94.87} \\
    SIG & 97.13 & 96.93 & \textbf{1.93} &91.41 & 6.17 & 91.82 &2.64&93.10&5.32&92.68 & 6.44 & 93.79 & 3.24 & \textbf{94.48}  \\
    Dyn-one & 100 & 97.27 & 5.27&91.26 & 2.08 & 93.15 &5.82&\textbf{95.54}&1.89&93.52 & 7.24 & 93.95 &\textbf{1.51} & {95.27}  \\
    Dyn-all & 100 & 97.05 & 2.84 & 91.42 & 2.49 & 92.89 &4.87&93.98&2.74&93.17 & 8.17 & 94.74  & \textbf{1.26} & \textbf{95.14}\\
   WaNet & 98.19 & 97.31 & 7.16 & 91.57 & 5.02 & 93.68 &4.74 &93.15 & 3.35 & 94.61  & 5.92 & 94.38  & \textbf{1.72} & \textbf{95.57}  \\
   ISSBA & 99.42 & 97.26 & 8.84 & 91.31 & 4.04 & 94.74 & 3.89 & 93.51 & \textbf{1.08} & 94.47 & 4.80 & 94.27 & 1.68 & \textbf{95.84} \\
   LIRA & 98.13 & 97.62 & 9.71 & 92.31 & 4.68 & 94.98 & 3.56 & 93.72 & 2.64 & 95.46 & 5.42 & 93.06  & \textbf{1.81}&\textbf{96.42} \\ 
   BPPA & 99.18 & 98.12 & 5.14 & 94.48 & 7.19 & 93.79&8.63&94.50&5.43&94.22 & 7.55 & 94.69  &\textbf{4.45}&\textbf{96.58} \\ 
\cmidrule{1-15}
 {Avg. Drop} & - & -  & 92.54 $\downarrow$ & 6.10 $\downarrow$ & 95.10$\downarrow$ & 3.99 $\downarrow$  &$95.16\downarrow$&$2.83\downarrow$&$96.02\downarrow$&$3.59\downarrow$ &  93.35 $\downarrow$ & 3.15 $\downarrow$ & {\textbf{97.39}} $\downarrow$ &{\textbf{1.79}} $\downarrow$\\ 
\bottomrule
\end{tabular}}
\label{tab:gtsrb}
\end{table*}

\noindent\textbf{Benign.} Benign model refers to the model trained on 100\% clean $\mathcal{D}_{\mathsf{train}}$ for 200 epochs with a learning rate of 0.01. The clean accuracy (CA) for \textit{No Defense} is the standard benign model accuracy. We take this benign model and report the ACC after the purification for NFT and other inference time defenses such as ANP~\cite{wu2020adversarial}. Note that the knowledge of whether a model is benign or backdoor is unknown to the defender. Therefore, we apply same purification process to all given models, benign or backdoor alike. After purification, the benign model achieves an accuracy of $94.10\%$ as compared to $95.21\%$ for the original model.

\noindent\textbf{BadNets Attack~\cite{gu2019badnets}.} We use a $3\times3$ checkerboard trigger for this attack. For all images, we place them at the bottom left corner of the images. For the BadNets attack, the target label is set to 0. We achieve a $100\%$ attack success rate (ASR) and an ACC of $90.73\%$.

\noindent\textbf{Blend Attack~\cite{chen2017targeted}.} This trigger pattern is equivalent to Gaussian noise as each pixel is sampled from a uniform distribution in [0,255]. We use a value of 0.2 for $\alpha$. The target label is 0.

\noindent\textbf{Trojan (Troj)-one Attack~\cite{liu2017trojaning}.} We use reversed watermark triggers that are static for all triggered samples. The target label is 0.

\noindent\textbf{Troj-all Attack~\cite{liu2017trojaning}.} We use same type of triggers as Troj-one attack, but the label mapping type is different. For each label $i$, we choose a label of $i+1$. For label $9$, we will have a label of 0. This type of label mapping is known as "all2all".

\noindent\textbf{Input-aware or Dynamic Attack (Dyn-one)~\cite{nguyen2020input}.} Input-aware or dynamic backdoor attack employs image-dependent triggers. Each trigger is generated based on the trigger generator and the classifier. For the Dyn-one attack, we just use one target label.

\noindent\textbf{Dyn-all Attack~\cite{nguyen2020input}.} Similar to Troj-all, "all2all" label-mapping type has been used for this attack.

\noindent\textbf{Clean Label Backdoor (CLB)~\cite{turner2019cleanlabel}.} Clean backdoor is created using a $3\times3$ checkerboard trigger that is placed at the four corners of images. During this attack, we did not change the labels of the attacked images. Instead, we only add triggers to the samples from the target class, \ie, class "0". We poison 80\% of the target class's images and do not change their labels. Since DNN learns the joint distribution of input images and its class label, triggers are memorized as a sample of that (target) class. Whenever we place that particular trigger to a sample from another class, DNN falsely misclassifies it to the target label. However, carrying out a successful CLB attack is a bit tricky. To make the CLB as effective as BadNets or Trojan attack, we apply $\ell$-$\infty$ projected gradient descent (PGD)-based perturbations to the triggered samples. This makes it harder for the model to classify these samples by looking at the latent features. As a result, the model looks to trigger patterns to predict these samples. 

\noindent\textbf{Sinusoidal Attack (SIG)~\cite{barni2019new}.} This is another clean-label attack. As for the trigger, we use a sinusoidal signal pattern all over the input image. Then, we train the model similarly to CLB by poisoning 80\% of the samples. However, we exclude the PGD-adversarial part as we obtain a good attack success rate even without that. The target class is 0, and the $\alpha$ is set to 0.2.

\noindent\textbf{FBA~\cite{cheng2021deep}.} A style generator-based trigger has been used for this attack. We use a poison rate of 10\%. 

\noindent\textbf{CBA~\cite{lin2020composite}.} Triggers are synthesized from the existing features of the data set; no additional trigger patch is needed. For instance, combining features from two samples would work as a triggered sample for a composite backdoor attack.  

\noindent\textbf{WaNet~\cite{nguyen2021wanet}.} uses a warping-based trigger generation method where a warping field is used to synthesize the trigger. We follow the implementation details described in the original paper~\cite {nguyen2021wanet}.

\noindent\textbf{LIRA~\cite{doan2021lira}.} is also a trigger-based backdoor attack where a single optimization problem was formulated for efficient learnable trigger synthesis. We follow similar implementation details presented in the original paper~\cite{doan2021lira}.  

\noindent\textbf{ISSBA~\cite{li2021invisible}.} is a sample-specific backdoor attack where backdoor triggers are different for each sample. Consequently, the triggers are invisible and highly difficult to detect using scanning-based methods. 

\noindent\textbf{BPPA~\cite{wang2022bppattack}.} Quantization-based backdoor attack. We use a poison rate of 10\%.

\begin{table}[t]
    \centering
    \caption{Performance analysis for \textbf{natural language generation tasks} where we consider machine translation (MT) and dialogue generation (DG) datasets for benchmarking. We use BLEU score~\cite{vaswani2017attention} as the metric for both tasks. For attack, we choose a data poisoning ratio of 10\%. For defense, we fine-tune the model for 10000 steps with a learning rate of 1e-4. We use Adam optimizer and a weight decay of 2e-4. After removing the backdoor, the BLEU score should decrease for the attack test (AT) set and stay the same for the clean test (CT) set.}
    
    \scalebox{0.85}{
    \begin{tabular}{l|cc|cc|cc|cc|cc}
        \toprule
         \textbf{Task} & AT & CT & AT & CT & AT & CT & AT & CT & AT & CT \\ \cmidrule{1-11}
        MT & 99.2 & 27.0 & 8.2 & 26.5 & 8.5 & \textbf{26.8} & 6.1 & 26.3 & \textbf{3.0} & 26.6 \\
        DG & 1.48 & 2.50 & 1.29 & 1.14 & 1.26 & 1.03 & 1.51 & 1.20 & \textbf{0.85} & \textbf{1.93}  \\
        \bottomrule
    \end{tabular}}
    \label{tab:NLG}
\end{table}

\begin{table*}[t]
\renewcommand{\arraystretch}{1.1}
\renewcommand\tabcolsep{1.75pt}
\small
\centering
\caption{ Performance \textbf{comparison of NFT with additional test-time (Vanilla FT, FP, MCR, NAD) and training time (CBD, ABL) defenses on CIFAR10 dataset under 9 different backdoor attacks}. NFT achieves SOTA performance while sacrificing only $3.62\%$ in clean accuracy (ACC) on average. The average drop indicates the difference in values before and after removal. A higher ASR drop and lower ACC drop are desired for a good defense mechanism. Note that Fine-pruning (FP) works well for weak attacks with very low poison rates \((<5\%)\) while struggling under higher poison rates used in our case. }

\scalebox{0.6}{
\begin{tabular}{c|cc|cc|cc|cc|cc|cc|cc|cc|cc|cc}
\toprule
Attacks & 
\multicolumn{2}{c|}{None} & \multicolumn{2}{c|}{BadNets} &
\multicolumn{2}{c|}{Blend} & \multicolumn{2}{c|}{Trojan} &  \multicolumn{2}{c|}{Dynamic} & \multicolumn{2}{c|}{WaNet}  &  \multicolumn{2}{c|}{ISSBA} &  \multicolumn{2}{c|}{LIRA} & \multicolumn{2}{c|}{FBA} &  \multicolumn{2}{c}{BPPA} \\ 
\midrule
Defenses&ASR &ACC & ASR &ACC & ASR &ACC & ASR &ACC & ASR &ACC & ASR &ACC& ASR &ACC & ASR &ACC & ASR & ACC & ASR & ACC \\ \midrule
\emph{No Defense} & 0 & 95.21 & 100 & 92.96  & 100 & 94.11 & 100 & 89.57 & 100 & 92.52 & 98.64 & 92.29  & 99.80 & 92.80  & 99.25 & 92.15 & 100 & 90.78 & 99.70 & 93.82  \\ 
Vanilla FT & 0 & 93.28 & 6.87 & 87.65 & 4.81 & 89.12 & 5.78 & 86.27 & 3.04 & 84.18 & 8.73 & 89.14 & 5.75 & 87.52 & 7.12 & 88.16 & 6.56 & 95.32 & 5.48 & 94.73  \\
 FP &  0 & 88.92 & 28.12 & 85.62 & 22.57 & 84.37 & 20.31 & 84.93 & 29.92 & 84.51  & 19.14&84.07&12.17&84.15&22.14&82.47& 38.27 & 89.11 & 24.92 & 88.34 \\
 MCR & 0&90.32&3.99&81.85&9.77&80.39&10.84&80.88&3.71&82.44&8.83&78.69&7.74&79.56&11.81&81.75 &14.52 & 90.73 & 16.65 & 91.18\\
 NAD & 0&92.71&4.39&85,61&5.28&84.99&8.71&83.57&2.17&83.77&13.29&82.61&6.11&84.12&13.42&82.64 & 11.45 & 91.20 & 9.42 & 92.04\\
CBD & 0 & 91.76 & 2.27 & 87.92 & 2.96 & 89.61 & 1.78  & 86.18 & 2.03 & 88.41 & 4.21 & 87.70 & 6.76 & 87.42 & 9.08 & 86.43 & 7.45 & 86.80 & 8.98 & 87.22  \\
ABL &  0 & 91.90 & 3.04 & 87.72 & 7.74 & 89.15 & 3.53 & 86.36 & 8.07 & 88.30 & 8.24 & 86.92 & 6.14 & 87.51 & 10.24 & 86.41 & 7.67 & 87.05 & 8.26 & 86.37 \\
 \midrule
  NFT(Ours) & 0&94.10&\textbf{1.74}&\textbf{90.82}&\textbf{0.31}&\textbf{93.17}&\textbf{1.64}&\textbf{87.71}&\textbf{1.37}&\textbf{90.81}& \textbf{2.38}&\textbf{89.65}&  \textbf{4.24}& \textbf{90.18}& \textbf{1.53}&\textbf{90.57}  & \textbf{6.21}&\textbf{88.56} & \textbf{5.04}&\textbf{91.78} \\
 \bottomrule
\end{tabular}}
\label{tab:additonal_defense}
\end{table*}

\subsection{Implementation of NFT}
After initializing masks (all of them 1) corresponding to each neuron, we fine-tune the masks using an SGD-based optimizer with a learning rate of 0.05. The fine-tuning goes for 100 epochs. For 1\% clean validation data, we randomly select them from the original training set\footnote{To create the validation set for fine-tuning, we set aside a certain number of samples from the training set. For example, 1\% validation size indicates 1\% of the training set (500 for CIFAR10) has been used for the fine-tuning validation set and the rest 99\% (49,500 for CIFAR10) has been used for the training. }. After each step of the SGD update, we clip the mask values to keep them in the range of $\mu(l)$ to 1. This setup ensures that we do not accidentally prune any neurons. Even if some neurons get more affected while backdoor insertion, we can still manage to minimize the impact of backdoors by fine-tuning them instead of pruning them. Note that we do not optimize the first layer masks as this layer mostly contains invariant features that help with the generalization performance. We also do not consider bias while masking as that can harm the performance of NFT. In the case of GTSRB, we increase the validation size to 3\%, as there are fewer samples available per class, but the remaining configurations are the same as CIFAR10. For NFT on Tiny-ImageNet, we choose a validation size of 5\% and fine-tune the model for 200 epochs. Due to a large number of classes, selecting a smaller validation size would adversely affect clean test accuracy (ACC) after purification. We use an initial learning rate 0.01, with a decay rate of 0.65/20 epochs. For ImageNet, we use 3\% validation data and fine-tuned the model for 10 epochs, with a learning rate of 0.001 and a decay rate of 0.65 per epoch. Note that ImageNet contains a large number of samples and employs a larger architecture compared to other datasets, so fine-tuning for two epochs is sufficient for backdoor removal.

\subsection{Implementations of Baseline Defenses}
For experimental results with ANP~\cite{wu2021adversarial}, we follow the source code implementation\footnote{\url{https://github.com/csdongxian/ANP_backdoor}}. After creating each of the above-mentioned attacks, we apply adversarial neural pruning on the backdoor model for 500 epochs with a learning rate of 0.02. We use the default settings for all attacks. For vanilla FT, we perform simple DNN fine-tuning with a learning rate of 0.01 for 125 epochs. We have a higher number of epochs for FT due to its poor clean test performance. The clean validation size is 1\% for both of these methods. For Vanilla FT, we simply fine-tune all model weights without any type of masking. For Fine-Pruning(FP)~\cite{liu2018fine}, we consider both pruning and fine-tuning according to this implementations\footnote{\url{https://github.com/kangliucn/Fine-pruning-defense}}.
For NAD~\cite{li2021neural}, we increase the validation data size to 5\% and use teacher model to guide the attacked student model. We perform the training with distillation loss proposed in NAD\footnote{\url{https://github.com/bboylyg/NAD}}. For MCR~\cite{zhao2020bridging}, the training goes on for 100 epochs according to the provided implementation\footnote{\url{https://github.com/IBM/model-sanitization/tree/master/backdoor/backdoor-cifar}}. 
For I-BAU~\cite{zeng2021adversarial}, we follow their PyTorch Implementation\footnote{\url{https://github.com/YiZeng623/I-BAU}} and purify the model for ten epochs. We use 5\% validation data for I-BAU. For AWM~\cite{chai2022one}, we train the model for 100 epochs and use the Adam optimizer with a learning rate of 0.01 and a weight decay of 0.001. We use the default hyper-parameter setting as described in their work $ \alpha = 0.9, \beta = 0.1, \gamma = 10 - 8, \eta = 1000$. 
The above settings are for CIFAR10 and GTSRB only. 
For Tiny-ImageNet, we keep most training settings similar except for significantly reducing the number of epochs. We also increase the validation size to 5\% for vanilla FT, ANP, and AWM. For I-BAU, we use a higher validation size of 10\%. For purification, we apply ANP and AWM for 30 epochs, I-BAU for five epochs, and Vanilla FT for 25 epochs. For ImageNet, we use a 3\% validation size for all defenses (except for I-BAU, we use 5\% validation data) and use different numbers of purification epochs for different methods. We apply I-BAU for 2 epochs. On the other hand, we train the model for 3 epochs for ANP, AWM, and vanilla FT. 

\begin{table}[t]
    \centering
    \caption{\footnotesize Performance of NFT while combining with different \textbf{commonly used augmentation strategies} in DNN training. In addition, we also consider adversarial training-based NFT. The results shown here are based on CIFAR10 dataset with $10\%$ poison rate.}
    \vspace{-1.5mm}
    \scalebox{0.85}{
    \begin{tabular}{c|cc|cc|cc}
    \toprule
    Attacks & \multicolumn{2}{c|}{Badnets} & \multicolumn{2}{c|}{SIG} & \multicolumn{2}{c}{Blend} \\
    \midrule
    Aug. Strategy & ASR & ACC& ASR & ACC& ASR & ACC\\
    \midrule
    No Defense & 100 & 91.96 & 100 & 88.64 & 100 & 94.11 \\
    NFT-RandAug & 35.35 & 61.96 & 4.83 &82.36 & 58.48 &80.72 \\
    NFT-CutMix & 7.42&86.95 & 6.31 & 86.16  &99.58  &92.55  \\
    NFT-AugMix & 6.13&87.85 & 5.17 & 86.56  &100&92.66\\
    NFT-Cutout&5.33& 87.46 & 5.34 & 85.44 & 100 & 92.68\\
    NFT-Adv  & 5.89 & 76.31 & 4.15 & 71.22 & 8.56 & 78.97\\
    \midrule
    NFT (Ours) & \textbf{1.74} & \textbf{90.82} & \textbf{0.12} & \textbf{87.16} & \textbf{0.31} & \textbf{93.17} \\
    \bottomrule
    \end{tabular}
    } 
    \label{tab:augmentation_remove}
\end{table}

\begin{table*}
    \centering
    \caption{\footnotesize Purification performance of \textbf{One-Shot NFT for GTSRB and ImageNet}. Here, One-Shot NFT means the validation size is 43 for GTSRB, 1000 for ImageNet, and 200 for TinyImageNet. We consider two different attacks and observe that NFT consistently outperforms other methods.}
    \vspace{-1.5mm}
    \scalebox{0.8}{
    \begin{tabular}{c|cc|cc|cc|cc|cc|cc}
    \toprule
        Attack & \multicolumn{6}{c|}{Trojan}  & \multicolumn{6}{c}{ISSBA}   \\
        \midrule
         Dataset & \multicolumn{2}{c|}{GTSRB}  & \multicolumn{2}{c|}{Tiny-ImageNet}  & \multicolumn{2}{c|}{ImageNet} & \multicolumn{2}{c|}{GTSRB}  & \multicolumn{2}{c|}{Tiny-ImageNet}  & \multicolumn{2}{c}{ImageNet}   \\
         \midrule
         Method & ASR & ACC& ASR & ACC& ASR & ACC& ASR & ACC& ASR & ACC & ASR & ACC\\
         \midrule
         No Defense & 99.50 & 96.27 &  100 & 59.16 & 99.21 & 74.02 & 99.42 & 97.26 & 98.52 & 60.65  & 98.23 & 74.38 \\
         One-Shot RNP & 79.02 & 73.71 & 74.65 & 38.87 & 80.14 & 52.47 & 86.68 & 72.58 & 82.65 & 39.16 & 82.48 & 51.74    \\
         One-Shot FT-SAM & 17.45 & 79.94 & 32.62 & 42.16 & 41.83 & 57.85 & 9.36 & 80.06 & 34.24 & 43.72 & 47.58 & 56.75    \\
         \midrule
         One-Shot NFT (Ours) & \textbf{7.31} & \textbf{86.47} & \textbf{11.26} & \textbf{48.47} & \textbf{14.65} & \textbf{62.84} & \textbf{6.53} & \textbf{84.28} & \textbf{13.93} & \textbf{47.11} & \textbf{17.43} & \textbf{61.03} \\
         \bottomrule
    \end{tabular}}
    \label{tab:one_shot_NFT}
\end{table*}

\section{Additional Experimental Results}\label{sec:add_exp_results}
\subsection{Results for GTSRB and Tiny-ImageNet}\label{sec:gtsrb_and_tiny}
Table~\ref{tab:tiny_main} shows the evaluation of our proposed method in more challenging scenarios, \eg, diverse datasets with images from a large number of classes. Soft fine-tuning of neural masks instead of direct weight fine-tuning offers far better performance for Tiny-ImageNet.  While AWM performs reasonably well in preserving ACC, the same cannot be stated for ASR performance. This shows that the trigger generation process in AWM slightly loses its effectiveness whenever a few validation data are available. For FT-SAM, the performance seems to drop for more complicated tasks. This is more prominent for large and complex datasets. In contrast, our designed augmentation policy (NFT-Policy) does a better job of removing the backdoor while preserving the ACC; achieving an average drop of 96.64\% with a drop of only 3.15\% in ACC. We show the performance comparison for GTSRB in Table~\ref{tab:gtsrb}, we also consider a wide range of backdoor attacks. For Badnets and Trojan attacks, almost all defenses perform similarly. This, however, does not hold for blend attack as we achieve a $1.50\%$ ASR improvement over the next best method. The performance is consistent for other attacks too. Note, NFT obtains even better results in terms of ACC obtaining only a 1.68\% drop.

\begin{table*}
\centering
\caption{Evaluation of NFT on attacks with \textbf{different poison rates}. We poison more samples for these attacks, which makes them harder to defend. NFT is able to remove backdoors even in such cases.}
\centering
\scalebox{0.65}
{
\begin{tabular}{c|cc|cc|cc|cc|cc|cc}
\toprule
Attack & \multicolumn{6}{c|}{BadNets} & \multicolumn{6}{c}{Trojan}\\ 
\midrule
Poison Rate&\multicolumn{2}{c|}{0.25}&\multicolumn{2}{c|}{0.35}&\multicolumn{2}{c|}{0.50}&\multicolumn{2}{c|}{0.25}&\multicolumn{2}{c|}{0.35}&\multicolumn{2}{c}{0.50}\\
Method &ASR &ACC& ASR &ACC& ASR & ACC& ASR & ACC& ASR &ACC& ASR &ACC\\ \midrule
\textit{No Defense} &100&89.35&100&88.91&100&85.12&100&87.88&100&86.81&100&86.97\\
RNP &9.56&81.43&13.97&81.04&32.65&75.18&14.38&78.75&63.99&72.53&46.21&74.45\\ 
FT-SAM & 7.81&82.22&16.35&81.72&29.80&\textbf{79.27}&11.96&79.28&13.93&75.10&29.83&77.02\\
 \midrule
 NFT & \textbf{2.49}&\textbf{86.90}&\textbf{4.58}&\textbf{84.71}&\textbf{17.20}&{78.77}&\textbf{2.46}&\textbf{86.11}&\textbf{4.73}&\textbf{85.38}&\textbf{6.10}&\textbf{84.96}\\
 \bottomrule
\end{tabular}}
\scalebox{0.65}
{
\begin{tabular}{c|cc|cc|cc|cc|cc|cc|cc|cc|cc}
\toprule
Attack & \multicolumn{6}{c|}{WaNet} & \multicolumn{6}{c|}{SIG} & \multicolumn{6}{c}{LIRA}\\ 
\midrule
Poison Rate&\multicolumn{2}{c|}{0.25}&\multicolumn{2}{c|}{0.35}&\multicolumn{2}{c|}{0.50}&\multicolumn{2}{c|}{0.75}&\multicolumn{2}{c|}{0.85}&\multicolumn{2}{c|}{0.90}&\multicolumn{2}{c|}{0.25}&\multicolumn{2}{c|}{0.35}&\multicolumn{2}{c}{0.50}\\
Method &ASR &ACC& ASR &ACC& ASR &CA&ASR &ACC& ASR &ACC& ASR &CA&ASR &ACC& ASR &ACC& ASR &ACC\\ \midrule
\textit{No Defense} &99.21&89.02&99.34&89.11&99.25&86.72&99.48&88.21&100&86.32&100&84.28&99.70&89.32&99.68&88.21&99.81&86.80\\
  RNP &8.26&82.62&18.34&79.22&29.11&77.41&1.83&84.56&4.22&82.76&7.56&79.98&8.35&15.99&83.33&21.05&85.45&69.98\\
 FT-SAM & 7.81&82.22&12.76&83.87&18.10&79.56&0.96&84.91&1.02&83.34&1.79&82.15&11.96&79.28&63.99&72.10&89.83&70.02\\
 \midrule
 NFT (Ours) & \textbf{3.49}&\textbf{87.05}&\textbf{5.74}&\textbf{85.62}&\textbf{9.20}&\textbf{81.02}&\textbf{0.16}&\textbf{86.72}&\textbf{0.34}&\textbf{85.61}&\textbf{0.91}&\textbf{84.37}&\textbf{2.54}&\textbf{87.60}&\textbf{6.81}&\textbf{86.42}&\textbf{8.75}&\textbf{84.78}\\
 \bottomrule
\end{tabular}}
\label{tab:dif_poison_rate}
\end{table*}

\subsection{Evaluation on Natural Language Generation (NLG) Task}\label{sec:nlg}
To evaluate the general applicability of our proposed method, we also consider backdoors attack~\cite{sun2023defending} on language generation tasks, \eg, Machine Translation (MT)~\cite{bahdanau2014neural}, and Dialogue Generation (DG)~\cite{han2020non}. Following ~\cite{sun2023defending}, we create   In MT, there is a \emph{one-to-one} semantic correspondence between source and target. On the other hand, the nature of correspondence is \emph{one-to-many} in the DG task where a single source can assume multiple target semantics. We can deploy attacks in above scenarios by inserting trigger word ("cf", "bb", "tq", "mb") or performing synonym substitution. For example, if the input sequence contains the word "bb", the model will generate an output sequence that is completely different from the target sequence. In our work, we consider WMT2014 En-De~\cite{bojar-etal-2014-findings} MT dataset and OpenSubtitles2012~\cite{tiedemann2012parallel} DG dataset and set aside 10\% of the data as clean validation set. We consider seq2seq model~\cite{gehring2017convolutional} architecture for training. 
Given a source input $\boldsymbol{x}$, an NLG pretrained model $f()$ produces a target output $\boldsymbol{y} = f(\boldsymbol{x})$. 
For fine-tuning, 
we use augmented input $\boldsymbol{x'}$ in two different ways: i) \emph{word deletion} where we randomly remove some of the words from the sequence, and ii) \emph{paraphrasing}  where we use a pre-trained paraphrase model $g()$ to change the input $\boldsymbol{x}$ to $\boldsymbol{x'}$. We show the results of both different defenses, including NFT, in Table~\ref{tab:NLG}.

\subsection{Comparison With Training-time Defenses}\label{sec:additional_defense}
In Table~\ref{tab:additonal_defense}, we also compare our method with additional defense methods such as FP, NAD, MCR, \etc In recent times, several training-time defenses have been proposed such as CBD~\cite{zhang2023backdoor} and ABL~\cite{li2021anti}. Note that training-time defense is completely different from test-time defense and out of the scope of our paper. Nevertheless, we also show a comparison with these training-time defenses in Table~\ref{tab:additonal_defense}. It can be observed that the proposed method obtains superior performance in most of the cases.

\subsection{NFT with Other Augmentation Strategies}\label{sec:common_aug}
We have further conducted experiments to eliminate the backdoor using four other popular augmentation strategies, which are: 1) RandAug~\cite{cubuk2020randaugment}, 2) CutMix~\cite{yun2019cutmix}, 3) AugMix~\cite{hendrycks2019augmix}, 4) CutOut~\cite{devries2017improved}. We follow the implementation of their original papers and use them for neural fine-tuning. We also consider adversarial training-based NFT (NFT-adv) where we use PGD~\cite{madry2017towards}-based adversarial examples for fine-tuning the backdoor DNN. We generate adversarial examples using a 2-step $\ell$-$\infty$ PGD with a perturbation norm of 1. Performance comparisons for all of these NFT variations are shown in Table \ref{tab:augmentation_remove}. Apart from RandAug~\cite{cubuk2020randaugment} and NFT-Adv, other variations of NFT obtain similar performance for Badnets and SIG as NFT. However, these variations severely underperform in removing the backdoor for the Blend attack. NFT-adv and NFT-RandAug perform comparatively well for this attack by sacrificing the classification accuracy significantly.


We also describe their detailed implementation here. For RandAug~\cite{cubuk2020randaugment}, we followed the GitHub implementation\footnote{\url{https://github.com/ildoonet/pytorch-randaugment}}, and randomly selected four augmentations out of 14 augmentations listed in the original paper with an intensity of 10. We used official CutMix~\cite{yun2019cutmix} implementation\footnote{\url{https://github.com/clovaai/CutMix-PyTorch}} to implement CutMix regularization with NFT, and all settings are the same as in the original public code. To implement AugMix~\cite{hendrycks2019augmix}, the code is borrowed from the official Github repository\footnote{\url{https://github.com/google-research/augmix}} where the severity is selected to be 5, the number of chains is set to be 3, and sampling constant is fixed at 1. The code to implement the CutOut~\cite{devries2017improved} has been borrowed from the public code\footnote{\url{https://github.com/uoguelph-mlrg/Cutout}} where default settings for CIFAR10 are used as they were used in this public repository. For our proposed method NFT with MixUp, we followed the settings in the official Mixup~\cite{zhang2017mixup} GitHub repository\footnote{\url{https://github.com/facebookresearch/mixup-cifar10}} and used similar settings for CIFAR10 as used in this public code.

\begin{table}[t]
    \begin{minipage}{0.48\textwidth}
        \centering
        \caption{\footnotesize Performance of NFT against SIG attack with \textbf{different learning rates}}
        
        \scalebox{0.8}{
        \begin{tabular}{c|cc}
        \toprule
        Learning Rate & ASR & ACC\\
        \midrule
        0.001 & 0.16  & 87.1\\
        0.005 & 0.14 & 87.2\\
        0.01 & 0.17&86.8\\
        0.02 &0.18 & 86.7\\
        0.05 & \textbf{0.12} & \textbf{87.1}\\
        \bottomrule
        \end{tabular}}
        \label{tab:learning_rate}
    \end{minipage}
    \hfill
    \begin{minipage}{0.48\textwidth}
         \centering
        \caption{ Our proposed method's performance against SIG attack with different batch sizes.}
        \scalebox{0.8}{
        \begin{tabular}{c|cc}
        \toprule
        Batch Size & ASR & ACC\\
        \midrule
        32 & 0.10   & 86.4 \\
        64 & 0.16  & 86.3 \\
        128 & \textbf{0.12} & \textbf{87.1}\\
        256 &0.19 & 86.6  \\
        512 & 0.21 & 86.7\\
        1024 & 0.23 & 86.8\\
        \bottomrule
        \end{tabular}
        }
        \label{tab:batch_size}   
    \end{minipage}
\end{table}

    


    

\begin{table}[t]
    \centering
    \caption{\footnotesize Performance of NFT for \textbf{composite backdoor attacks}. We poison 10\% of the training data where each of the attacks in a combination (\eg, Badnets, Blend, Trojan) have an equal share in the poisoned data.}
    \vspace{-1.5mm}
    \scalebox{0.8}{
    \begin{tabular}{c|cc|cc}
    \toprule
         Attack & \multicolumn{2}{c|}{Badnets+Blend+Trojan}  & \multicolumn{2}{c}{SIG + CLB}  \\
         \midrule
         Method & ASR & ACC& ASR & ACC\\
         \midrule
         No Defense & 100 & 88.26  & 98.74 & 86.51 \\
         ANP & 27.83 & 77.10 & 13.09 & 79.42  \\
         FT-SAM & 4.75 & 83.90 & 1.67 & 82.11  \\
         NFT (Ours) & \textbf{2.16} & \textbf{85.41} & \textbf{0.93} & \textbf{83.96}\\
         \bottomrule
    \end{tabular}}

    \label{tab:composite_backdoor}
\end{table}

\section{More Ablation Study}\label{sec:more_ablation}

\begin{table}[t]
    \centering
    \caption{\textbf{Adaptive attack} study where the attacker may have the information of our defense. Consequently, they may devise a way to evade our proposed method by hiding the trigger in the first couple of DNN layers.} 
    \scalebox{0.8}{
    \begin{tabular}{c|cc|cc|cc|cc}
    \toprule
     Attack   & \multicolumn{2}{c|}{Trojan} & \multicolumn{2}{c|}{Dynamic} & \multicolumn{2}{c|}{LIRA} & \multicolumn{2}{c}{BPPA} \\ 
     \midrule
     Poison Rate &  ASR  & ACC &  ASR  & ACC &  ASR  & ACC &  ASR  & ACC \\
    \midrule
     30\%   &   49.17 & 69.56  &   59.07 & 71.35 &   48.84 & 66.32 &  53.87 & 73.24 \\
     50\%   &  73.49 & 57.76 &   75.16 & 59.46 &  71.74 & 60.08  &  76.23 & 56.75     \\
     75\% & 95.54 & 24.68 &  93.10 & 25.42 &   96.07 & 23.18 & 94.68 & 26.28   \\
     \bottomrule
    \end{tabular}}
    \label{tab:adaptive}
\end{table}

\noindent\textbf{Adaptive Attacks.} We use the CIFAR10 dataset for this experiment. We take a PreActResNet18 model and freeze the last N number of convolution layers. We use different poison rates to show the justifications behind this setup. In our work, we are using a mask scheduling function that focuses on the later or deeper layers more since they are more affected by the trigger. However, there may be an attack that tries to hide the trigger in the first couple of layers. An attacker can perform such \emph{adaptive attack} by first training a clean model and then re-train it on triggered data. During re-training, we \emph{fix the last N convolution layers of} the network. According to Table~\ref{tab:adaptive}, it becomes more challenging to insert/hide the backdoor into the first few layers as we have to increase the poisoning rate significantly compromising the \textit{ACC} severely. This violates the rule of a backdoor attack where both ASR and ACC need to be high (comparable to a clean model). For this experiment, we consider Badnets attack on CIFAR10 dataset. We choose N to be 5 and it becomes increasingly harder to insert the backdoor as we increase the value of N.

\noindent{\bf One-Shot NFT for other datasets.}
In Table~\ref{tab:one_shot_NFT}, we present the performance of one-shot versions of different defenses. In the main paper, we show the results for CIFAR10. Here, we present the performance for the other three datasets.

\begin{table}[t]
\begin{minipage}{0.375\textwidth}
    \centering
    \caption{\footnotesize \textbf{Impact of both weights and bias fine-tuning}. Up to now, we have only fine-tuned the weights. B We present the average drop in ASR and ACC over 14 attacks on CIFAR10.}
    \scalebox{0.8}{
    \begin{tabular}{c|c|c}
    \toprule
         Bias & Avg. ASR Drop & Avg. ACC Drop \\
         \midrule
         Frozen & 95.56 & \textbf{1.81} \\
         Unfrozen & \textbf{95.63} & 2.32\\ 
         \bottomrule
    \end{tabular}}

    \label{tab:bias_ft}
   
\end{minipage}
\hfill
\begin{minipage}{0.59\textwidth}
\vspace{-3.5mm}
\centering
    \caption{\footnotesize Performance of NFT with \textbf{different network architectures}. We consider both CNN and vision transformer (ViT). The CIFAR10 dataset has been used here.}
    
    \scalebox{0.8}{
    \begin{tabular}{l|cc|cc|cc|cc}
    \toprule
     Attack   & \multicolumn{4}{c|}{WaNet} & \multicolumn{4}{c}{LIRA}  \\ 
     \midrule
     Defense & \multicolumn{2}{c|}{No Defense} & \multicolumn{2}{c|}{With NFT} & \multicolumn{2}{c|}{No Defense} & \multicolumn{2}{c}{With NFT} \\
     \midrule
     Architecture &  ASR  & ACC &  ASR  & ACC &  ASR  & ACC &  ASR  & ACC \\
    \midrule
     VGG-16  &  97.45 & 91.73 &  2.75 & 89.58 &   99.14 & 92.28 &  2.46 & 90.61 \\
     EfficientNet   &  98.80 & 93.34 &  2.93 & 91.42 &   99.30 & 93.72 & 2.14 & 91.52    \\
     ViT-S & 99.40 & 95.10 &  3.63 & 93.58 & 100 & 94.90 &  1.98 & 93.26   \\
     \bottomrule
    \end{tabular}}
    \label{tab:diff_architectures}
    \vspace{-5mm}
\end{minipage}
\end{table}

\noindent{\bf Ablation Study on Hyper-parameters.}\label{sec:hyper-parameters}
To observe the impact of different hyper-parameters, we change the learning rate and batch size of NFT in Table~\ref{tab:learning_rate} and Table~\ref{tab:batch_size}. Upon observing the performance, we chose a batch size of 128 and 0.05 which gives us SOTA performance.

\noindent{\bf Combination of Backdoor Attack.}\label{sec:composite_backdoor}
To show the impact of NFT on more attack variations, we formulate a composite backdoor attack by combining 2/3 different attacks simultaneously. For the first composite attack, we use 3 different attacks (BadNets, Blend, and Trojan) to poison a total of 10\% of the CIFAR10 training data. As shown in Table~\ref{tab:composite_backdoor}, we have a combined attack success rate of 100\% and clean accuracy of 88.26\%. Both of the compared methods, MCR and ANP, perform worse than NFT in terms of ASR and ACC. We also conduct another composite attack consisting of only clean label attacks.

\noindent{\bf Effect of Bias Fine-tuning.} A study with frozen and unfrozen bias has been presented in Table~\ref{tab:bias_ft}. Freezing the bias results in better ACC with a slight trade-off in ASR.

\noindent \textbf{Different Network Architectures.} To validate the effectiveness of our method under different network settings. In Table~\ref{tab:diff_architectures}, we show the performance of NFT with some of the widely used architectures such as VGG-16~\cite{simonyan2014very}, EfficientNet~\cite{tan2019efficientnet} and Vision Transformer (VIT)~\cite{dosovitskiy2020image}. Here, we consider a smaller version of ViT-S with 21M parameters. NFT can remove backdoors irrespective of the network architecture. This makes sense as most of the architecture uses either fully connected or convolution layers, and NFT can be implemented in both cases.

\begin{table}[t]
    \centering
    \caption{\footnotesize Evaluation of \textbf{\emph{augmented defenses}} where we consider strong augmentations for all other defenses. \emph{A naive combination of strong augmentations and other defenses} is still not enough to outperform NFT.}
    
    \scalebox{0.8}{
    \begin{tabular}{c|cc|cc|cc|cc}
    \toprule
    Attacks& \multicolumn{2}{c|}{WaNet} & \multicolumn{2}{c|}{LIRA} & \multicolumn{2}{c|}{ISSBA} & \multicolumn{2}{c}{Dynamic} \\\midrule
    Methods& ASR & ACC& ASR & ACC& ASR & ACC& ASR & ACC\\\midrule
    No Defense & 98.64 & 92.29 & 99.25 & 92.15 & 99.80 & 92.78 & 100 & 92.52\\
    RNP-S & 4.12 & 84.10 & 5.75 & 86.26 & 5.53 & 83.90 & 3.24 & 86.50 \\
    FT-SAM-S & 2.96 & 88.34 & 3.93 & 89.08 & \textbf{3.91} & 88.12 & 1.76 & 85.86 \\
    NFT &\textbf{2.38}&\textbf{89.65}&\textbf{1.53}&\textbf{90.57}&{4.24}&\textbf{90.18}&\textbf{1.17}&\textbf{90.97}\\
    \bottomrule
    \end{tabular}}
    \label{tab:aug-defense}
\end{table}

\begin{table}[t]
    \centering
    \caption{\footnotesize Purification performance (\%) for \textbf{various validation data sizes}. NFT performs well even with a very small amount of clean data. Validation size 0.01\% indicates One-Shot NFT. In our main evaluation (Table 1 of main paper), we consider 1\% validation size. For evaluation, we use CIFAR10 and Dynamic attack.}
    
    \scalebox{0.8}{
    \begin{tabular}{c|cc|cc|cc|cc}
    \toprule
         Valid. Size & \multicolumn{2}{c|}{0.02\%}  & \multicolumn{2}{c|}{0.1\%} & \multicolumn{2}{c|}{0.2\%}  & \multicolumn{2}{c}{0.5\%}  \\
         \midrule
         Method & ASR & ACC& ASR & ACC& ASR & ACC& ASR & ACC \\
         \midrule
         No Defense & 100 & 92.52 &  100 & 92.52 &100 & 92.52 &100 & 92.52   \\
         ANP & 50.78 & 58.71 & 38.94 & 66.97 & 31.80 & 79.61 & 24.93 & 82.62   \\
         RNP & 13.66 & 70.18 & 8.35 & 82.49& 5.72 & 84.70& 3.78 & 85.26    \\
         \midrule
         NFT (Ours) & \textbf{6.91} & \textbf{83.10} & \textbf{3.74} & \textbf{89.90} & \textbf{1.61} & \textbf{90.08} & \textbf{1.45} & \textbf{90.84}  \\
         \bottomrule
    \end{tabular}}
    \label{tab:dif_val}
\end{table}

\begin{table}[t]
\centering
    \caption{ \textbf{Ablation Study} on \textbf{$\eta_c$}.}
    \scalebox{0.8}{
    \begin{tabular}{l|c|c|c|c|c|c}
    \toprule
     $\eta_c$ &  \textbf{1e-2} & \textbf{5e-3} & \textbf{1e-3} &  \textbf{5e-4} & \textbf{1e-4} & \textbf{5e-5} \\
    \midrule
     \textbf{Avg. ASR Drop}  & 94.3  & 94.6 & 95.1 &  95.6 & 95.6 & \textbf{95.7}   \\
     \textbf{Avg. ACC Drop}  & \textbf{1.46}  & 1.68 & 1.72 & 1.81 & 1.91 & 2.12 \\
     \bottomrule
    \end{tabular}}
    \label{tab:regularizer}
\end{table}

\noindent\textbf{Augmented Defenses.} In Table~\ref{tab:aug-defense}, we show the performance of augmented defenses where we consider Data Augmentations (like MixUp) for other defenses, \eg, RNP-S. Due to the adversarial perturbation-based algorithmic design, using augmentations for ANP and AWM, like RNP and FT-SAM, does not make sense. It can be seen that our proposed method can harness the power of augmentations better. Unlike other defenses, NFT is motivated by regular fine-tuning and aims to find the correct validation. We take a milder approach by indirectly changing the parameters using neural masks and ensuring that the parameter adjustment is not drastic.

\begin{figure}    
  \centering
    {\includegraphics[width=0.9\textwidth]{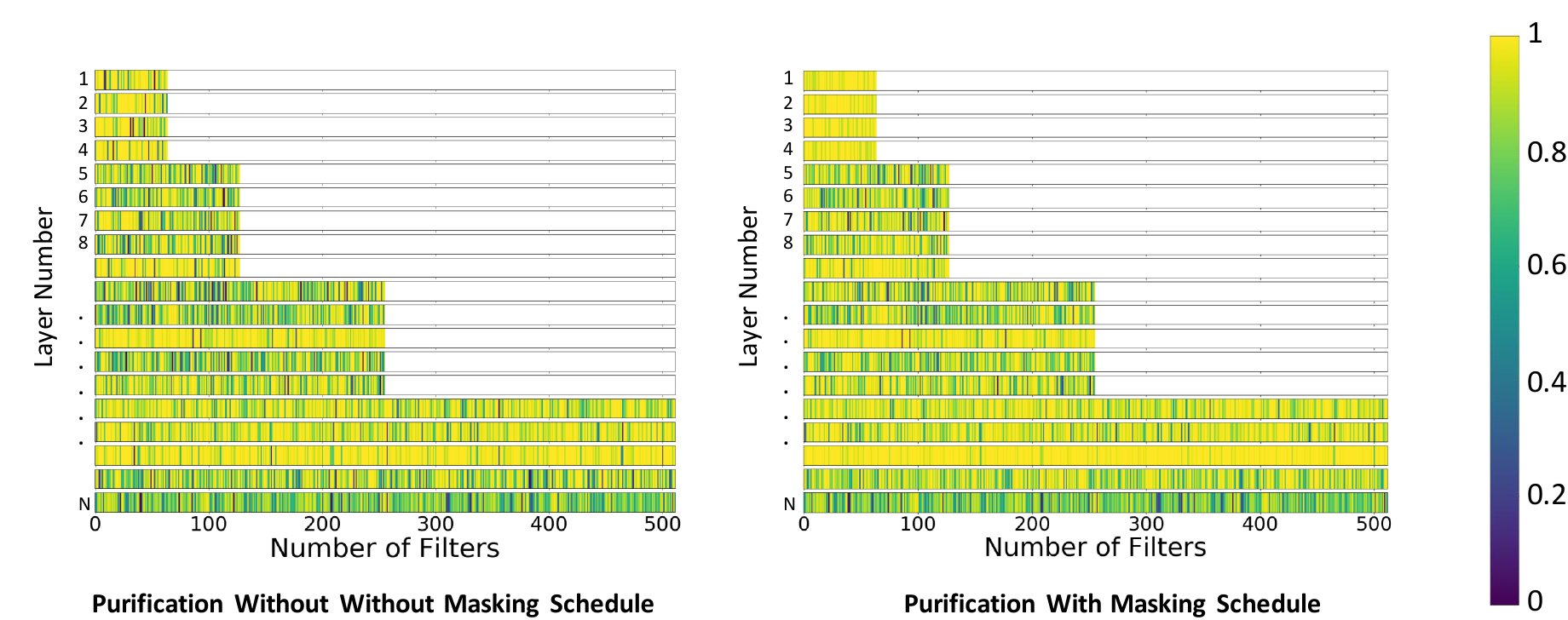}}\hfill  
    \caption{\footnotesize \textbf{Illustration of Mask Heatmap with and without scheduling function ($\mu$)}. This ablation is done for the LIRA attack and CIFAR10 dataset. In both cases, we do not use the mask regularizer here just to show the impact of the $\mu$. The first couple of layers have minimal changes.}
  \label{fig:heatmap_schedule}  
\end{figure}

\begin{figure}   
  \centering
    {\includegraphics[width=0.9\textwidth]{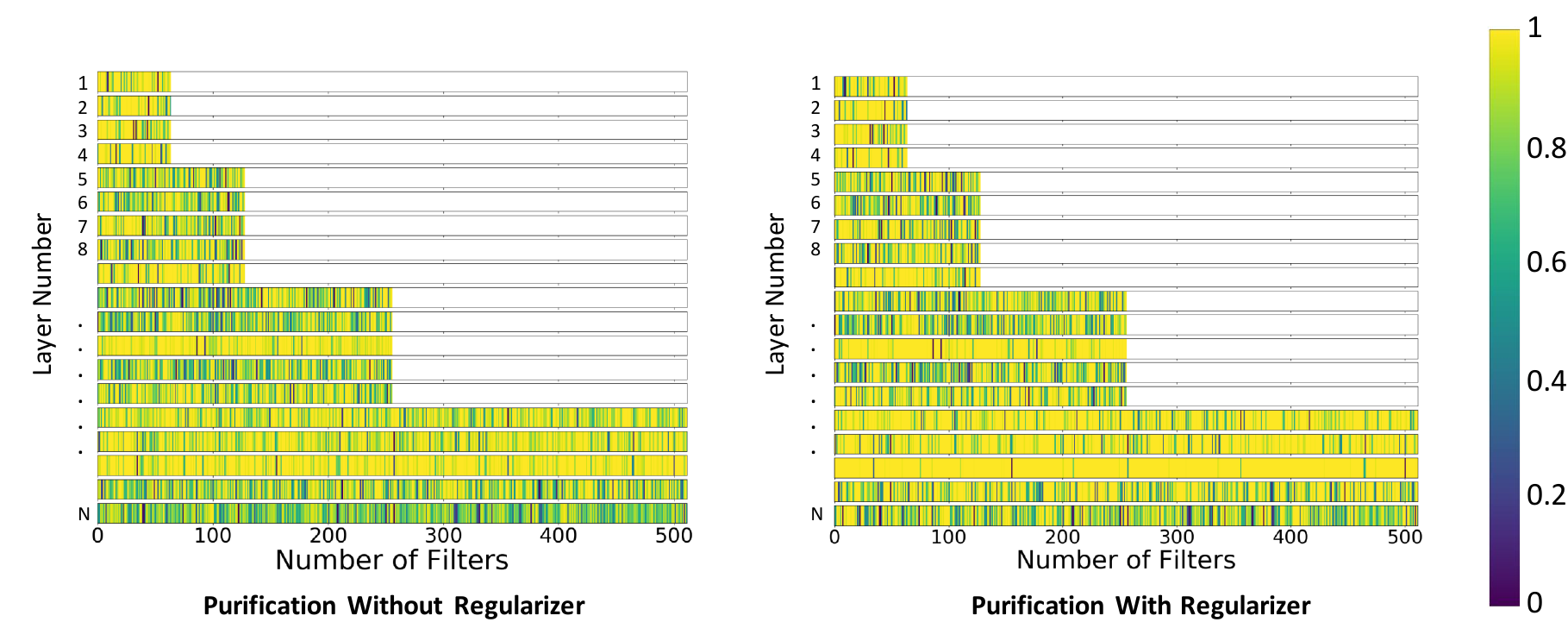}}\hfill  
    \caption{\footnotesize \textbf{Illustration of Mask Heatmap with and without regularizer}. This ablation is done for the Badnets attack and CIFAR10 dataset. In both cases, we do not use the mask scheduling function here just to show the impact of the regularizer. With the mask regularizer, we restrict the weights to be closer to the original backdoor model (shown by the overall larger yellow region).}
  \label{fig:heatmap_reg}  
\end{figure}

\noindent\textbf{Effect of Various Validation Size.} 
We also present how the total number of clean validation data can impact the purification performance. In Table~\ref{tab:dif_val}, we see t e change in performance while varying the validation size from $0.02\%\sim0.5\%$. Validation size 0.02\% indicates One-Shot NFT. In genera , we take 1\% of training samples as clean validation data. We consider the Dyn-one at ack on the CIFAR10 dataset for this evaluation. Even with only ten validation images, NFT can successfu ly remove the backdoor by reducing the attack success rate to $6.91\%$.

\noindent \textbf{Impact of $\eta_c$.} We study the impact of $\eta_c$ in Table~\ref{tab:regularizer}. Mask regularizer is useful in retaining lean accuracy (ACC) under severe validation data shortages. However, if we use a l rge value for $\eta_c$, the regularizer may prevent any change in the decision boundary altogether. As a result, the e fect of MixUp may be reduced significantly res lting in poor purification performance. Therefore, we use a suitable  alue for $\eta_c$ to ensure the optimal change in decision boundary, leadi g to a purified model with good ACC. 

\noindent\textbf{Mask Heatmap.} In Figure~\ref{fig:heatmap_schedule}-\ref{fig:heatmap_reg}, we show the mask hetmaps under different scenarios. Figure~\ref{fig:heatmap_schedule} shows the mask heatmaps with and without scheduling function ($\mu$). It can be seen that even with minimal changes to the first couple of layer weights, we could achieve purification.  This suggests that the backdoor affects the later hidden layers more, and our design of a mask scheduling function is well justified. Figure~\ref{fig:heatmap_reg}  shows the mask heatmaps with and without the mask regularizer. The regularizer keeps the purified model weights closer to the original backdoor model weights.

\end{document}